\newcommand{\inlineitem}[1][]{%
\ifnum\enit@type=\tw@
    {\descriptionlabel{#1}}
  \hspace{\labelsep}%
\else
  \ifnum\enit@type=\z@
       \refstepcounter{\@listctr}\fi
    \quad\@itemlabel\hspace{\labelsep}%
\fi}
\newcounter{thmcount}    
\newtheorem{setting}{Setting}
\newtheorem{theorem}[thmcount]{Theorem}
\newtheorem{remark}[thmcount]{Remark}
\newtheorem{lemma}[thmcount]{Lemma}
\newtheorem{proposition}[thmcount]{Proposition}
\newtheorem{definition}[thmcount]{Definition}
\newtheorem{assumption}{Assumption}
\theoremstyle{definition}
\newtheorem*{notation}{Notation}
\newcommand\Item[1][]{%
  \ifx\relax#1\relax  \item \else \item[#1] \fi
  \abovedisplayskip=0pt\abovedisplayshortskip=0pt~\vspace*{-\baselineskip}}
\newcommand{\astar}{a^{\star}}
\newcommand{\ind}{\mathbbm{1}} %
\newcommand{\wrt}{w.r.t.\ }
\newcommand{\supp}{\operatorname{supp}}
\let\argmin\relax
\DeclareMathOperator*{\argmin}{argmin}
\newcommand{\given}{\mid}
\newcommand{\ci}{\mathrel{\perp\mspace{-10mu}\perp}}
\newcommand\numberthis{\addtocounter{equation}{1}\tag{\theequation}}
\DeclareMathOperator{\EX}{\mathbb{E}}%
\def \phig {\phi|_{\Im(g_0)}}
\renewcommand{\P}{\mathbb{P}}
\newcommand{\R}{\mathbb{R}}
\newcommand{\N}{\mathbb{N}}
\DeclareMathOperator{\doo}{do}
 \def \calA {\mathcal A}
 \def \calB {\mathcal B}
 \def \calH {\mathcal H}
 \def \calL {\mathcal L}
 \def \calS {\mathcal S}
 \def \calV {\mathcal V}
 \def \calX {\mathcal X}
 \def \calY {\mathcal Y}
 \def \calZ {\mathcal Z}
\title{
Identifying Representations for Intervention Extrapolation
}
\author{Sorawit Saengkyongam\textsuperscript{1},\,\,Elan Rosenfeld\textsuperscript{2},\, Pradeep Ravikumar\textsuperscript{2}, \\ Niklas Pfister\textsuperscript{3}, and Jonas Peters\textsuperscript{1}}
\date{\textsuperscript{1}ETH Zürich\, \textsuperscript{2}Carnegie Mellon University\, \textsuperscript{3}University of Copenhagen}
\begin{document}

\maketitle

\begin{abstract}

    The premise of identifiable and causal representation learning is to improve the current representation learning paradigm in terms of generalizability or robustness.
    Despite recent progress in questions of identifiability, more theoretical results demonstrating concrete advantages of these methods for downstream tasks are needed.
    In this paper, we
    consider the task of intervention extrapolation:
    predicting how interventions affect an outcome, even when those interventions are not observed at training time, 
    and show that identifiable representations can provide an effective solution to this task even if the interventions affect the outcome non-linearly. 
    Our setup includes an outcome variable $Y$, observed features $X$, which are generated as a non-linear transformation of latent features $Z$, and exogenous action variables $A$, which influence $Z$. The objective of intervention extrapolation is then to predict 
    how 
    interventions on $A$ that lie outside the training support of $A$ affect $Y$.
    Here, extrapolation becomes possible if the effect of $A$ on $Z$ is linear and the residual when regressing Z on A has full support. As $Z$ is latent, we combine the task of intervention extrapolation with identifiable representation learning, which we call \texttt{Rep4Ex}: we aim to map the observed features $X$ into a subspace that allows for non-linear extrapolation in $A$. We show that the hidden representation is identifiable up to an affine transformation in $Z$-space, which, we prove, is sufficient for intervention extrapolation. The identifiability is characterized by a novel 
    constraint describing the linearity assumption of $A$ on $Z$. Based on this insight, we propose a flexible method that enforces the linear invariance constraint and can be combined with any type of autoencoder. We validate our theoretical findings through a series of synthetic experiments and show that our approach can indeed succeed in predicting the effects of unseen interventions.

\end{abstract}

\section{Introduction}

Representation learning \citep[see, e.g.,][for an overview]{bengio2013representation} underpins the success of modern machine learning methods as evident, for example, in their application to natural language processing and computer vision. Despite the tremendous success of such machine learning methods, it is still an open question when and to which extent they generalize to unseen data distributions. 
It is further unclear, 
which precise role representation learning 
can play in tackling this task.

To us, the main motivation for identifiable and causal representation learning \citep[e.g.,][]{scholkopf2021toward} is to overcome this shortcoming.
The core component of this approach 
involves learning a representation of the data that 
reflects some causal aspects of the underlying model. Identifying this from the observational distribution is referred to as the identifiability problem.
Without any assumptions on the data generating process, learning identifiable representations is not possible \citep{Hyvaerinen1999}. To show identifiability, previous works have explored various assumptions, including the use of auxiliary information \citep{hyvarinen2019nonlinear, khemakhem2020variational}, sparsity \citep{ moran2021identifiable, lachapelle2022disentanglement}, interventional data \citep{brehmer2022weakly, seigal2022linear, 
ahuja2022weakly,
ahuja2023interventional, buchholz2023learning} and structural assumptions \citep{halva2021disentangling, kivva2022identifiability}. However, this body of work has focused solely on the problem of identifiability. 
Despite its potential, however, convincing theoretical results illustrating the benefits of such identification in solving tangible downstream tasks are arguably scarce.

In this work, we consider the task of intervention extrapolation, that is, predicting how interventions that were not present in the training data will affect an outcome.
We study a setup with an outcome $Y$; observed features $X$ which are generated via non-linear transformation of latent predictors $Z$; and exogenous action variables $A$ which influence $Z$. We assume the underlying data generating process depicted in Figure~\ref{fig:intro_CRL}.
The dimension of $X$ can be larger than the dimension of $Z$ and 
we allow for potentially unobserved confounders 
between $Y$ and $Z$ (as depicted by the two-headed dotted arrow between $Z$ and $Y$). 
Adapting notation from the independent component analysis (ICA) literature \citep{hyvarinen2000independent},
we refer to $g_0$ as a mixing (and $g^{-1}_0$ as an unmixing) function.

In this setup, the task of intervention extrapolation is to predict the effect of a previously unseen intervention on the action variables $A$ (with respect to the outcome $Y$). Using do-notation \citep{Pearl2009}, we thus aim to estimate $\EX[Y | \doo(A = \astar)]$, where $\astar$ lies outside the training support of $A$. 
Due to this extrapolation,
$\EX[Y | \doo(A = \astar)]$,
which may be non-linear in $\astar$,
cannot be consistently estimated by only considering the conditional expectation of $Y$ given $A$
(even though $A$ is exogenous and 
$\EX[Y | \doo(A = a)] = 
\EX[Y | A = a]$ for all $a$ in the support of $A$), see Figure~\ref{fig:intro_extrapolation}. 
We formally prove this in Proposition~\ref{prop:non-identi}. 
In this paper, the central assumption that permits learning identifiable representation and subsequently solving the downstream task is that the effect of $A$ on $Z$ is linear, that is, $\EX[Z \mid A] = M_0 A$ for an unknown  matrix $M_0$. 
\begin{figure}
     \centering
     \begin{subfigure}[b]{0.385\textwidth}
         \centering
         \includegraphics[width=0.8\textwidth]{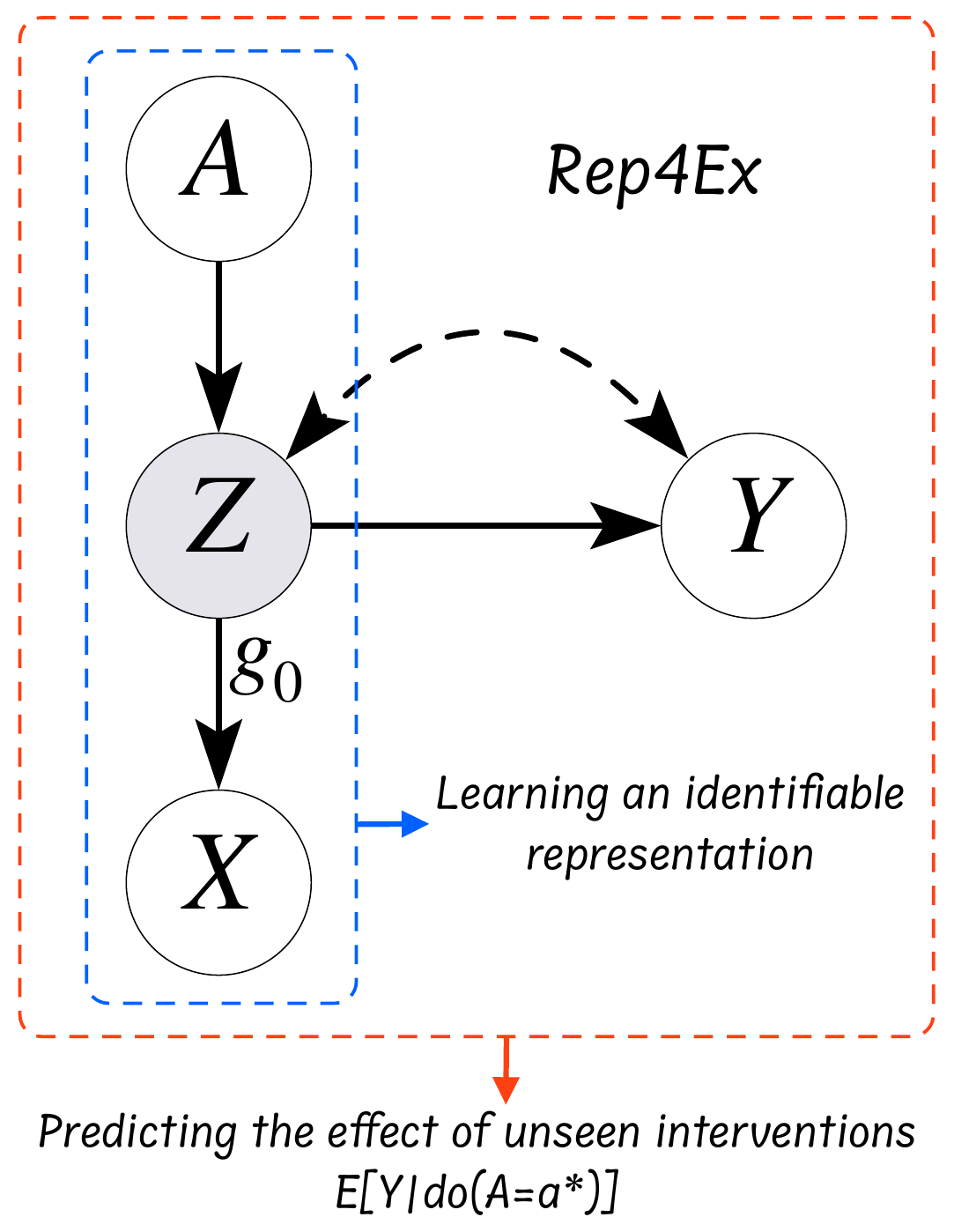}
         \caption{Graphical model of the problem setup}
         \label{fig:intro_CRL}
     \end{subfigure}
     \hspace{14pt}
     \begin{subfigure}[b]{0.56\textwidth}
         \centering
         \includegraphics[width=0.8\textwidth]{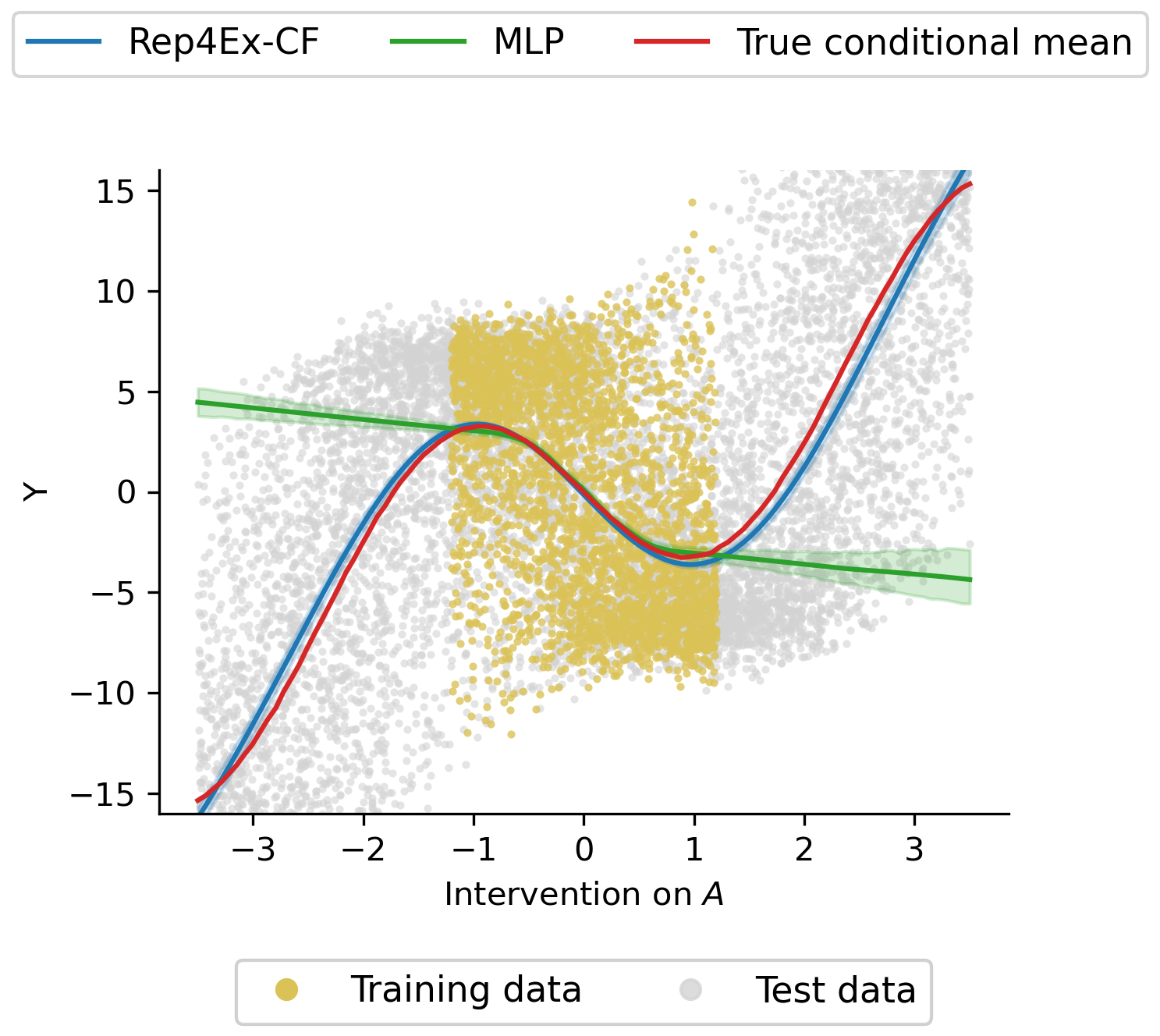}
         \caption{Example illustrating intervention extrapolation; during training, $A$, $X$, and $Y$ are observed}
         \label{fig:intro_extrapolation}
     \end{subfigure}
     \caption{In this paper, we  consider the goal of 
     intervention extrapolation, see (b). 
     We are given 
training data (yellow) that cover only a limited range of possible values of $A$. During test time (grey), we would like to predict $\EX[Y | \doo(A=a^*)]$ for previously unseen values of $a^*$.
The function $a^* \mapsto \EX[Y | \doo(A=a^*)]$ (red) can be non-linear in $a^*$.
We argue in Section~\ref{sec:fully_observe_Z}
how this can be achieved using control functions if the data follow a structure like in (a) and $Z$ is observed.  
We show in Section~\ref{sec:cf_hiddenZ} that, under suitable assumptions,
the problem is still solvable if we first have to reconstruct the hidden representation $Z$ (up to a transformation) from $X$. 
The representation is used to predict $\EX[Y | \doo(A=a^*)]$, so we learn a representation for intervention extrapolation (\texttt{Rep4Ex}).
}
\end{figure}

The approach we propose in this paper, \texttt{Rep4Ex-CF},
successfully extrapolates the effects outside the training support by performing two steps (see Figure~\ref{fig:intro_CRL}):
In the first stage, we use
$(A, X)$ to learn an encoder $\phi: \calX \rightarrow \calZ$ that identifies, from the observed distribution of $(A,X)$, the unmixing function $g^{-1}_0$ up to an affine transformation and thereby obtains a feature representation $\phi(X)$. 
To do that,
we propose to make use of a novel constraint based on the assumption of the linear effect of $A$ on $Z$, which, as we are going to see, enables 
identification. Since this constraint has a simple analytical form, it can be added as a regularization term to an auto-encoder loss. In the second stage, we use $(A, \phi(X), Y)$ to estimate the interventional expression effect $\EX[Y | \doo(A = \astar)]$. The model in the second stage is adapted from the method of control functions in the econometrics literature \citep{telser1964iterative, heckman1977dummy, newey1999nonparametric}, where one views $A$ as instrumental variables.
Figure~\ref{fig:intro_extrapolation} 
shows results of our proposed method
(\texttt{Rep4Ex-CF}) 
on a simulated data set,  together with the outputs of
and a standard neural-network-based regression (\texttt{MLP}).

We believe that our framework provides a complementary perspective on causal representation learning. Similar to most works in that area, we also view $Z$ as the variables that we ultimately aim to control. However, in our view, direct (or hard) interventions on $Z$ are inherently ill-defined due to its latent nature. We, therefore, consider the action variables $A$ as 
a means to modify
the latent variables $Z$. 
As an example, in the context of reinforcement learning, one may view $X$ as an observable state, $Z$ as a latent state, $A$ as an action, and $Y$ as a reward. Our aim is then to identify the actions that guide us toward the desired latent state which subsequently leads to the optimal expected reward. 
The 
ability to extrapolate to unseen values of $A$  
comes (partially) from the linearity of $A$ on $Z$; such extrapolation therefore becomes possible if we recover the true latent variables $Z$ up to an affine transformation. The problem of learning identifiable representations can then be understood as the process of mapping the observed features $X$ to a subspace that permits extrapolation in $A$. 
We refer to this task of learning a representation for intervention extrapolation as \texttt{Rep4Ex}.

\subsection{Relation to existing work}

Some of the recent work on
representation learning for latent causal discovery 
also relies
on (unobserved) interventions to show identifiability, sometimes with auxiliary information. These works often assume that the interventions occur on one or a fixed group of nodes in the latent DAG \citep{ahuja2022weakly, buchholz2023learning, zhang2023identifiability} or that they are exactly paired \citep{brehmer2022weakly, von2023nonparametric}.
Other common conditions include strong assumptions on the mixing function (e.g., linearity or some other parametric form) \citep{rosenfeld2020risks, seigal2022linear, ahuja2023interventional, varici2023score} or precise structural conditions on the generative model \citep{cai2019triad, kivva2021learning, xie2022identification, jiang2023learning, kong2023identification}. Unlike these works, we study interventions on exogenous (or "anchor") variables, akin to simultaneous soft interventions on the latents. 
Identifiability is also studied in
nonlinear ICA \citep[e.g.,][]{hyvarinen2016unsupervised, hyvarinen2019nonlinear, khemakhem2020variational, schell2023nonlinear}, we discuss the relation in Appendix~\ref{app:nonlinear_ICA}.

The task of predicting the effects of new interventions has been explored in several prior works. \citet{nandy2017estimating, saengkyongam2020learning, zhang2023identifiability} consider learning the effects of new joint interventions based on observational distribution and single interventions. \citet{bravo2023intervention} combine data from various regimes to predict intervention effects in previously unobserved regimes. Closely related to our work, \citet{gultchin2021operationalizing} focus on predicting causal responses for new interventions in the presence of high-dimensional mediators $X$. Unlike our work, they assume that the latent features are known and do not allow for unobserved confounders.

Our work is related to research that utilizes exogenous variables for causal effect estimation and distribution generalization. Instrumental variable (IV) approaches \citep{Wright1928, angrist1996identification} exploit the existence
    of the exogenous variables to estimate causal effects in the presence of unobserved confounders. Our work draws inspiration from the control function approach in the IV literature \citep{telser1964iterative, heckman1977dummy, newey1999nonparametric}.
Several works \citep[e.g.,][]{rojas2018invariant, Arjovsky2019, rothenhausler2021anchor,  christiansen2021causal, rosenfeld2022domain, saengkyongam2022exploiting} have used exogenous variables to increase robustness and perform distribution generalization. While the use of exogenous variables enters similarly in our approach, these existing works focus on a different task and do not allow for nonlinear extrapolation.

\section{Intervention extrapolation with observed $Z$}\label{sec:fully_observe_Z} 

To provide better intuition and insight into our approach, we start by considering a setup in which $Z$ is observed, which is equivalent to assuming that we are given the true underlying representation.
We now focus  on the intervention extrapolation part, see Figure~\ref{fig:intro_CRL} (red box) with $Z$ observed.
Consider an outcome $Y \in \calY \subseteq \R$, predictors $Z \in \calZ \subseteq \R^d$, and exogenous action variables $A \in \calA \subseteq \R^{k}$. We assume the following structural causal model
\citep{Pearl2009}\begin{equation}\label{eq:scm_knownZ}
\calS:
\begin{cases}
A\coloneqq \epsilon_A \\
Z\coloneqq M_0 A + V \\
Y \coloneqq \ell(Z) + U,
\end{cases}
\end{equation}
where $\epsilon_A, V, U$ are noise variables and we assume that $\epsilon_A \ci (V,U)$, $\EX[U] = 0$, and $M_0$ has full row rank. 
Here, $V$ 
and $U$ may be dependent.

\begin{notation}
For a structural causal model (SCM) $\calS$, we denote by $\P^{\calS}$ the observational distribution entailed by $\calS$ and the corresponding expectation by $\EX^{\calS}$. When there is no ambiguity, we may omit the superscript $\calS$. Further, we employ the do-notation to denote the distribution and the expectation under an intervention. In particular, we write $\P^{\calS; \doo(A=a)}$ and 
$\EX^S[\cdot | \doo(A = a)]$ 
to denote the distribution and the expectation under an intervention setting $A \coloneqq a$, respectively, 
and
 $\P^{\doo(A=a)}$
and
$\EX[\cdot | \doo(A = a)]$ 
if there is no ambiguity. Lastly, for any random variable $B$, we denote by $\supp^{\calS}(B)$ the support\footnote{The support of a random vector $B \in \Omega \subseteq \R^q$ (for some $q \in \N^+)$ is 
defined as 
the set of all $b \in \Omega$ for which every open neighborhood of $b$ (in $\Omega$) has positive probability.} of $B$ in the observational distribution $\P^{\calS}$. Again, when the SCM is clear from the context, we may omit $\calS$ and write $\supp(B)$ as the support in the observational distribution.
\end{notation}

Our goal is to compute the effect of an unseen intervention on the action variables $A$ (with respect to the outcome $Y$), that is, $\EX[Y | \doo(A=\astar)]$, 
where $\astar \notin \supp(A)$. A naive approach 
to tackle this problem is to estimate the conditional expectation $\EX[Y | A=a]$ by regressing $Y$ on $A$ using a sample from the observational distribution of $(Y, A)$. Despite $A$ being exogenous, from \eqref{eq:scm_knownZ} we only have that $\EX[Y | \doo(A=a)] = \EX[Y | A=a]$ for all $a\in\supp(A)$. As $\astar$ lies outside the support of $A$, we face the non-trivial challenge of extrapolation. The proposition below shows that 
in our model class
$\EX[Y | \doo(A=\astar)]$ is indeed not identifiable from the conditional expectation %
$\EX[Y | A]$ alone. Consequently, $\EX[Y | \doo(A=\astar)]$ cannot be consistently estimated by simply regressing $Y$ on $A$. 
(The result is independent 
of the fact whether $Z$ is observed or not and applies to the setting of unobserved $Z$ in the same way, see Section~\ref{sec:cf_hiddenZ}. Furthermore, the result still holds even when $V$ and $U$ are independent.) All
proofs can be found in Appendix~\ref{app:proofs}.
\begin{proposition}[Regressing $Y$ on $A$ does not suffice]\label{prop:non-identi}
There exist SCMs $\calS_1$ and $\calS_2$ of the form \eqref{eq:scm_knownZ} that satisfy all of the following conditions
\begin{enumerate}[label=(\roman*)]
    \item $\supp^{\calS_1}(V) = \supp^{\calS_2}(V) = \R$
    \item $\supp^{\calS_1}(A) = \supp^{\calS_2}(A)$
    \item $\forall a \in \supp^{\calS_1}(A): \EX^{\calS_1}[Y | A = a] = \EX^{\calS_2}[Y | A = a]$
    \item There exists $\calB \subseteq \calA$ with positive Lebesgue measure such that 
    \begin{equation*}
        \forall a \in \calB: \EX^{\calS_1}[Y | \doo(A = a)] \neq \EX^{\calS_2}[Y | \doo(A = a)].
    \end{equation*}
\end{enumerate}
\end{proposition}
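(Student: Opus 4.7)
The plan is to exhibit an explicit one-dimensional construction in which both SCMs share $\epsilon_A$, $V$, $U$, and $M_0$ and differ only in the regression function $\ell$, chosen so that the induced regression functions agree on $\supp(A)$ but disagree on a set of positive Lebesgue measure outside.

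First I would reduce the statement to a single scalar function. In any SCM of the form \eqref{eq:scm_knownZ}, the independence $A = \epsilon_A \ci (V,U)$ together with $\EX[U] = 0$ forces both $\EX[Y \mid A = a]$ and $\EX[Y \mid \doo(A = a)]$ to coincide with
\[
m(a) \;\coloneqq\; \EX[\ell(M_0 a + V)]
\]
(the former only on $\supp(A)$, the latter on all of $\calA$). Hence (iii)--(iv) reduce to producing two triples $(\ell_j, V_j, M_{0,j})$ for which $m_1 \equiv m_2$ on $\supp(A)$ but $m_1 \ne m_2$ on a positive-measure set outside $\supp(A)$.

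I will work in $d = k = 1$ with $M_0 = 1$, take $\epsilon_A \sim \mathrm{Uniform}(0,1)$ so $\supp(A) = [0,1]$, take $U \equiv 0$, and let $V$ have Laplace density $f_V(v) = \tfrac{1}{2} e^{-|v|}$ in both SCMs (ensuring $\supp(V) = \R$). Only $\ell$ will differ: set $\ell_1 \equiv 0$ and, for $\ell_2$, pick any compactly supported $C^2$ function $H\colon \R \to \R$ with $H \equiv 0$ on $[0,1]$ and $H > 0$ on an open set $\calB \subseteq \R \setminus [0,1]$ (e.g.\ a standard smooth bump supported on $[4,6]$), then put $\ell_2 \coloneqq H - H''$. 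This $\ell_2$ is bounded with compact support, so $m_2$ is well-defined.

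The one technical step is the distributional identity $(1 - \partial^2) f_V = \delta_0$, which follows in a line from $f_V'' = f_V$ off $0$ together with the unit downward jump of $f_V'$ at the origin. Granting this,
\[
m_2(a) \;=\; (\ell_2 \ast f_V)(a) \;=\; \bigl((1-\partial^2) H\bigr) \ast f_V \;=\; H \ast \bigl((1-\partial^2) f_V\bigr) \;=\; H(a),
\]
while $m_1 \equiv 0$. Thus $m_1 = m_2 = 0$ on $[0,1]$, giving (iii), and $m_1 = 0 \ne H = m_2$ on $\calB$, giving (iv); conditions (i) and (ii) hold by construction. The point one must be careful about is the noise distribution: if $V$ had a nowhere-vanishing characteristic function (e.g.\ Gaussian), then $m$ would be real-analytic in $a$ and agreement on $[0,1]$ would force agreement everywhere. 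The Laplace choice sidesteps this because its Fourier transform $(1+t^2)^{-1}$ inverts cleanly via $(1-\partial^2)$, making the construction completely transparent.
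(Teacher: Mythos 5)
Your construction is correct, and it takes a genuinely different route from the paper's. The paper builds its counterexample by changing \emph{both} the noise density of $V$ and the structural function $\ell$ between the two SCMs: it hand-crafts two piecewise densities $p^1_V, p^2_V$ and two indicator-type $\ell$'s so that the window integrals $\int_{a-1}^{a+1}p^1_V$ and $\int_{a-2}^{a}p^2_V$ agree for $a\in(0,1)$ but disagree for $a\in(-3,-2)$; the verification is a direct elementary computation. You instead keep everything identical across the two SCMs except $\ell$, and exploit that the Laplace density is the Green's function of $1-\partial^2$, so the convolution $m(a)=\EX[\ell(a+V)]$ can be inverted exactly and $m_2=H$ can be prescribed to vanish on $\supp(A)$ and be positive elsewhere. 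Your reduction to $m(a)=\EX[\ell(M_0a+V)]$ is exactly the paper's equation \eqref{eq:proof:non-identi_1}, and the integration-by-parts (or distributional) identity $(1-\partial^2)f_V=\delta_0$ checks out, so all four conditions hold. What your version buys is a sharper counterexample — the two models share the same noise law, so non-identifiability is pinned entirely on $\ell$ — and it makes visible which feature of $\P_V$ is being exploited; what the paper's version buys is a fully elementary argument with no distributional calculus. One imprecision in your closing remark: the obstruction you would face with Gaussian $V$ is \emph{not} the non-vanishing of its characteristic function — the Laplace characteristic function $(1+t^2)^{-1}$ is also nowhere zero — but the analyticity of the Gaussian density, which forces $a\mapsto m(a)$ to be real-analytic and hence determined by its values on $(0,1)$. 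This is precisely the second condition in Assumption~\ref{assm:regularity_V}, and your example shows that condition is doing real work there.
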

Proposition~\ref{prop:non-identi} affirms that relying solely on the knowledge of the conditional expectation $\EX[Y | A]$ is not sufficient to identify the effect of an intervention outside the support of $A$. It is, however, possible to incorporate additional information beyond the conditional expectation to help us identify $\EX[Y | \doo(A=\astar)]$. 
In particular, inspired by the method of control functions in econometrics, we propose to identify $\EX[Y | \doo(A=\astar)]$ 
from the observational distribution of $(A,X,Z)$
based on the following identities,
\begin{align*}
    \EX[Y | \doo(A = \astar)] &= \EX[\ell(Z) |\doo(A = \astar)] + \EX[U |\doo(A = \astar)] \\ 
    &= \EX[\ell(M_0\astar + V) |\doo(A = \astar)] + \EX[U | \doo(A = \astar)]  \\
    &= \EX[\ell(M_0\astar + V)], \numberthis \label{eq:control_fn_doA}
\end{align*}
where the last equality follows from $\EX[U] = 0$ and the fact that, for all $\astar \in \calA, \P_{U,V} = \P^{\doo(A=\astar)}_{U,V}$.
Now, since $A \ci V$, 
we have
$
\EX[Z\given A] = M_0 A
$ and
$M_0$ can be identified by regressing $Z$ on $A$. $V$ is then 
identified with $V = Z - M_0 A$. 
$V$ is called a control variable and, as argued by \citet{newey1999nonparametric}, for example, it  can be used to identify $\ell$:
defining $\lambda : v \mapsto \EX[U | V = v]$, we have for all $z,v \in \supp(Z,V)$
\begin{align*}
    \EX[Y | Z=z, V=v] &=\EX[\ell(Z) + U | Z=z, V=v] 
    = \ell(z) + \EX[U | Z =z , V=v] \\ 
    &= \ell(z) + \EX[U | V=v] 
    = \ell(z) + \lambda(v), \numberthis \label{eq:control_additive_fns}
\end{align*}
where in the second last equality, we have used that 
$U \ci Z \given V$.\footnote{For completeness, we prove this result as Lemma~\ref{lemma:UZV} in Appendix~\ref{app:tech_results}. As an alternative to our proof, one can also argue using SCMs and Markov properties in ADMGs \citep{richardson2003markov}.}
In general,~\eqref{eq:control_additive_fns} does not suffice to identify $\ell$ (e.g., $V$ and $Z$ are not necessarily independent of each other).
Only under additional assumptions, such as 
parametric assumptions on the function classes,
$\ell$ and $\lambda$ are identifiable up to additive constants\footnote{The constant can be identified by using the assumption $\EX[U] =0$.}.
In our work, we utilize an assumption by \citet{newey1999nonparametric} that puts restrictions on the joint support of $A$ and $V$ and identifies $\ell$ on the set 
$M_0\supp(A)+\supp(V)$.
Since $M_0$ and $V$ are identifiable, too, this then allows us to compute, by~\eqref{eq:control_fn_doA}, $\EX[Y | \doo(A = \astar)]$ for all 
$\astar$ s.t.\ $M_0 \astar + \supp(V) \subseteq M_0\supp(A)+\supp(V)$;
thus, 
$\supp(V) = \R^d$
is a sufficient condition 
to identify 
$\EX[Y | \doo(A = \astar)]$ for all 
$\astar\in\mathcal{A}$. This support assumption, together with the additivity of $V$ in \eqref{eq:scm_knownZ}, is key to ensure
that the nonlinear function $\ell$ can be inferred on all of $\R^d$, allowing for nonlinear extrapolation.
Similar ideas have been used for extrapolation in a different setting and under different assumptions by \citet{shen2023engression}.

In some applications, we may want to compute the effect of an intervention on $A$ conditioned on $Z$, that is, $\EX[Y | Z=z, \doo(A=\astar)]$. This conditional expression is identifiable, too: for all $z \in \supp(Z)$ and $\astar \in \calA$,
we have
\begin{align*}
    \EX[Y | Z=z, \doo(A=\astar)] &= 
    \ell(z) + \EX[U | Z=z, \doo(A=\astar)] \\
    &= \ell(z) + \EX[U | M_0\astar + V =z, \doo(A=\astar)] \\
    &= \ell(z) + \EX[U | V = z - M_0\astar, \doo(A=\astar)] \\
    &= \ell(z) + \EX[U | V = z - M_0\astar] \qquad \text{since $ \P_{U,V} = \P^{\doo(A=\astar)}_{U,V}$,} \\
    &= \ell(z) + \lambda(z - M_0\astar),
\end{align*}
where, $\ell$ and $\lambda$ are identifiable by \eqref{eq:control_additive_fns} under some regularity conditions on the joint support of $A$ and $V$ \citep{newey1999nonparametric}.

\section{Intervention extrapolation
via identifiable representations}\label{sec:cf_hiddenZ}
Section~\ref{sec:fully_observe_Z} illustrates the problem of intervention extrapolation
in the setting where the latent predictors $Z$ are fully observed. We now consider the setup where we do not directly observe $Z$ but instead we observe $X$ which are generated by applying a non-linear mixing function to $Z$.
Formally, consider an outcome variable $Y \in \calY \subseteq \R$, observable features $X \in \calX \subseteq \R^m$, latent predictors $Z \in \calZ = \R^d$, and  action variables $A \in \calA \subseteq \R^{k}$. We model the underlying data generating process by the following SCM. 
\begin{setting}[Rep4Ex]\label{setting:1}
We assume the SCM 
\begin{equation}
\calS:
\begin{cases}
A \coloneqq \epsilon_A \\ 
Z \coloneqq M_0 A + V \\
X \coloneqq g_0(Z) \\ 
Y \coloneqq \ell(Z)  + U,
\end{cases} \label{eq:reduced_scm}
\end{equation}
where $\epsilon_A, V, U$ are noise variables and we assume that the covariance matrix of $\epsilon_A$ is full-rank, $\epsilon_A \ci (V,U)$, $\EX[U] = 0$, $\supp(V) = \R^d$, and $M_0$ has full row rank (thus $k \geq d$).
Further, $g_0$ and $\ell$ are measurable functions and $g_0$ is assumed to be injective.
In this work, we only consider interventions on $A$.
For example, we do not require that the SCM models interventions on $Z$ correctly. Possible relaxations of the linearity assumption between $A$ and $Z$ and the absence of noise in $X$ are discussed in Remark~\ref{remark:assumptions} in Appendix~\ref{app:s1remark}.
\end{setting}

Our goal is to compute $\EX[Y | \doo(A = \astar)]$ for some $\astar \notin \supp(A)$. As in the case of observed $Z$, the naive method of regressing $Y$ on $A$ using a non-parametric regression fails to handle the extrapolation of $\astar$ (see Proposition~\ref{prop:non-identi}). 
We, however, can incorporate additional information beyond the conditional expectation to identify $\EX[Y | \doo(A = \astar)]$ through the method of control functions. From \eqref{eq:control_fn_doA}, we have for all $\astar \in \calA$ that
\begin{equation}\label{eq:control_fn_doA_2}
    \EX[Y | \doo(A = \astar)] = \EX[\ell(M_0\astar + V)].
\end{equation}
Unlike the case where we observe $Z$, the task of identifying the unknown components on the right-hand side of \eqref{eq:control_fn_doA_2} becomes more intricate. 
In what follows, we show that if we can learn an encoder $\phi: \calX \rightarrow \calZ$ that identifies $g_0^{-1}$ up to an affine transformation (see Definition~\ref{def:affine_ident} below), we can construct a procedure that identifies the right-hand side of \eqref{eq:control_fn_doA_2} and 
can thus be used to predict the effect of unseen interventions on $A$.

\begin{definition}[Affine identifiability]\label{def:affine_ident}
    Assume Setting~\ref{setting:1}. An encoder $\phi: \calX \rightarrow \calZ$ is said to \emph{identify $g^{-1}_0$ up to an affine transformation} (\emph{aff-identify} for short) if there exists an invertible matrix $H_{\phi} \in \R^{d \times d}$ and a vector $c_{\phi} \in \R^d$ such that
    \begin{equation}
    \forall z \in \calZ: (\phi \circ g_0)(z) = H_{\phi}z + c_{\phi}. \label{eq:affine_ident}
    \end{equation}
We denote by $\kappa_{\phi}: z \mapsto H_{\phi} z + c_{\phi}$ the corresponding affine map.
\end{definition}
Definition~\ref{def:affine_ident} implies immediately that any aff-identifying $\phi$ must be surjective.
Under Setting~\ref{setting:1}, we show an equivalent formulation of affine identifiability in Proposition~\ref{remark:affine_ident} stressing that $Z$ can be reconstructed from $\phi(X)$. 
In our empirical evaluation (see Section~\ref{sec:exp}), we adopt this formulation to define a metric for measuring how well an encoder $\phi$ aff-identifies $g^{-1}_0$.
\begin{proposition}[Equivalent definition of affine identifiability]\label{remark:affine_ident}
    Assume Setting~\ref{setting:1}. An encoder $\phi: \calX \rightarrow \calZ$ aff-identifies $g^{-1}_0$ if and only if
     there exists a matrix $J_{\phi} \in \R^{d \times d}$ and a vector $d_{\phi} \in \R^d$ such that
    \begin{equation}
    \forall z \in \calZ: z = J_{\phi}\phi(x)  + d_{\phi}, \quad \text{where } x \coloneqq g_0(z).
    \end{equation}
\end{proposition}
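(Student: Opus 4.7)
The plan is to prove both directions of the equivalence by direct algebraic manipulation, using the invertibility of the matrix on one side and a surjectivity argument on the other.

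For the forward direction, I would start by assuming that $\phi$ aff-identifies $g_0^{-1}$, so there exists an invertible $H_\phi \in \R^{d\times d}$ and $c_\phi \in \R^d$ with $(\phi\circ g_0)(z)=H_\phi z+c_\phi$ for all $z\in\calZ$. Because $H_\phi$ is invertible, I can simply solve for $z$: for every $z\in\calZ$ and $x=g_0(z)$,
\begin{equation*}
z = H_\phi^{-1}\bigl(\phi(x)-c_\phi\bigr) = H_\phi^{-1}\phi(x) - H_\phi^{-1}c_\phi.
\end{equation*}
Setting $J_\phi:=H_\phi^{-1}$ and $d_\phi:=-H_\phi^{-1}c_\phi$ then gives the desired representation. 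This direction is essentially bookkeeping.

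For the reverse direction, the only real content is to show that the matrix $J_\phi$ appearing in the hypothesis must be invertible; once this is established, one rearranges exactly as above to recover $H_\phi=J_\phi^{-1}$ and $c_\phi=-J_\phi^{-1}d_\phi$. To see invertibility, I would use that in Setting~\ref{setting:1} we have $\calZ=\R^d$, so the hypothesis $z = J_\phi\phi(g_0(z)) + d_\phi$ holds for every $z\in\R^d$. Viewing the right-hand side as a function of $z$, its image equals $\R^d$ because the left-hand side already ranges over $\R^d$. Translating by $-d_\phi$, the image of $z\mapsto J_\phi\phi(g_0(z))$ is all of $\R^d$, and since this image is contained in the column space of $J_\phi$, the linear map $J_\phi:\R^d\to\R^d$ must be surjective, hence invertible. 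Then
\begin{equation*}
(\phi\circ g_0)(z) = J_\phi^{-1}(z-d_\phi) = J_\phi^{-1}z - J_\phi^{-1}d_\phi,
\end{equation*}
which is the affine form required by Definition~\ref{def:affine_ident} with $H_\phi:=J_\phi^{-1}$ and $c_\phi:=-J_\phi^{-1}d_\phi$.

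The only conceptual step that is not pure rearrangement is the surjectivity-to-invertibility argument in the reverse direction, and even that is straightforward because $\calZ=\R^d$ under Setting~\ref{setting:1}; without this full-support assumption on $\calZ$, one would need to argue more carefully (e.g.\ that the image of $\phi\circ g_0$ is not contained in any proper affine subspace), so I would state this observation explicitly in the proof to make clear where the assumption is used.
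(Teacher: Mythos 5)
Your proof is correct and follows essentially the same route as the paper's: the forward direction is the same direct rearrangement using invertibility of $H_\phi$, and the reverse direction uses the same observation that $\calZ=\R^d$ forces $J_\phi$ to be surjective (hence invertible), which you simply spell out in slightly more detail than the paper does. No gaps.
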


Next, let $\phi: \calX \rightarrow \calZ$ be an encoder that aff-identifies $g^{-1}_0$ and $\kappa_{\phi}: z \mapsto H_{\phi}z + c_{\phi}$ be the corresponding affine map. From \eqref{eq:control_fn_doA_2}, we have for all $\astar \in \calA$ that %
\begin{align*}
    \EX[Y | \doo(A = \astar)] &= \EX[\ell(M_0\astar + V)]  
    = \EX[(\ell \circ \kappa_{\phi}^{-1})(\kappa_{\phi} (M_0\astar + V))] \\
    &= \EX[(\ell \circ \kappa_{\phi}^{-1})(H_{\phi} M_0\astar + c_{\phi} + H_{\phi}\EX[V] + H_{\phi}(V - \EX[V]))] \\
    &= \EX[(\ell \circ \kappa_{\phi}^{-1})(M_{\phi}\astar + q_{\phi} + V_{\phi})], \numberthis \label{eq:control_fn_doA_3}
\end{align*}
where we define 
\begin{equation}\label{eq:MQV}
    M_{\phi} \coloneqq H_{\phi} M_0,\,q_{\phi} \coloneqq c_{\phi} + H_{\phi}\EX[V], \text{ and } V_{\phi} \coloneqq H_{\phi}(V - \EX[V]).
\end{equation}
We now outline how to identify the right-hand side of \eqref{eq:control_fn_doA_3} by using the encoder $\phi$ and formalize the result in Theorem~\ref{thm:cf_identifiability}.

\paragraph{Identifying $M_{\phi}$, $q_{\phi}$ and $V_{\phi}$}
Using that $\phi$ aff-identifies $g^{-1}_0$, we have (almost surely) that
\begin{equation}\label{eq:phiX_linearA}
    \phi(X) = (\phi \circ g_0)(Z) = H_{\phi} Z + c_{\phi} = H_{\phi}M_0 A + H_{\phi}V + c_{\phi} = M_{\phi}A + q_{\phi} + V_{\phi}.
\end{equation}
Now, since $V_{\phi} \ci A$ (following from $V \ci A$), we can identify the pair $(M_{\phi}, q_{\phi})$ by regressing $\phi(X)$ on $A$. 
The control variable $V_{\phi}$ can therefore be obtained as $V_{\phi} = \phi(X) - (M_{\phi} A + q_{\phi})$.

\paragraph{Identifying $\ell \circ \kappa^{-1}_{\phi}$}
Defining $\lambda_{\phi} : v \mapsto \EX[U | V_{\phi} = v]$, we have, for all $\omega,v \in \supp((\phi(X),V_{\phi}))$, 
\begin{align*}
    \EX[Y | \phi(X)=\omega, V_{\phi}=v] \overset{(*)}&{=} \EX[Y | \kappa_{\phi}(Z)=\omega, V_{\phi}=v] 
    = \EX[Y | Z=\kappa_{\phi}^{-1}(\omega), V_{\phi}=v] \\
    &= \EX[\ell(Z) + U | Z=\kappa_{\phi}^{-1}(\omega), V_{\phi}=v]\\  
    &= (\ell \circ \kappa_{\phi}^{-1})(\omega) + \EX[U | Z=\kappa_{\phi}^{-1}(\omega), V_{\phi}=v] \\ 
    \overset{(**)}&{=} (\ell \circ \kappa_{\phi}^{-1})(\omega) + \EX[U | V_{\phi}=v] 
    = (\ell \circ \kappa_{\phi}^{-1})(\omega) + \lambda_{\phi}(v), \numberthis \label{eq:additive_structure}
\end{align*}
where the equality $(*)$ holds since $\phi$ aff-identifies $g_0^{-1}$ and $(**)$ holds by Lemma~\ref{lemma:UZV_phi}, see Appendix~\ref{app:tech_results}.
Similarly to the case in Section~\ref{sec:fully_observe_Z}, the functions $\ell \circ \kappa_{\phi}^{-1}$ and $\lambda_{\phi}$ are identifiable (up to additive constants) under some regularity conditions on the joint support of $A$ and $V_{\phi}$ \citep{newey1999nonparametric}. 
We make this precise in the following theorem, which summarizes the deliberations from this section.
\begin{theorem}\label{thm:cf_identifiability}
    Assume Setting~\ref{setting:1} and let $\phi: \calX \rightarrow \calZ$ be an encoder that aff-identifies $g^{-1}_0$. Further, define the optimal linear function from $A$ to $\phi(X)$ as\footnote{Here, $W_{\phi}$, $\alpha_{\phi}$ and $\tilde{V}_{\phi}$ are equal to $M_{\phi}$, $q_{\phi}$ and $V_{\phi}$ as shown in the proof. We introduce the new notation (e.g., \eqref{eq:defWphi} instead of \eqref{eq:MQV}) to emphasize that the expressions are functions of the observational distribution.}
    \begin{equation} \label{eq:defWphi}
     (W_{\phi}, \alpha_{\phi}) \coloneqq \argmin_{W \in \R^{d \times k}, \alpha \in \R^d} \EX[\norm{\phi(X) - (WA +  \alpha)}^2]
    \end{equation} 
    and the control variable $\tilde{V}_{\phi} := \phi(X) - (W_{\phi} A + \alpha_{\phi})$. Lastly,  let $\nu: \calZ \rightarrow \calY$ and $\psi: \calV \rightarrow \calY$ be additive regression functions such that
        \begin{equation}
            \forall \omega, v \in \supp((\phi(X), \tilde{V}_{\phi})): \EX[Y | \phi(X) = \omega, \tilde{V}_{\phi} = v] = \nu(\omega) + \psi(v).
        \end{equation}
    If the functions $\ell, \lambda_{\phi}$ are differentiable and 
    the interior of $\supp(A)$ is convex,
    then the following two statements hold
    \begin{enumerate}[label=(\roman*), leftmargin=19pt]
        \item
        $\forall \astar \in \calA: \EX[Y | \doo(A = \astar)] = \EX[\nu(W_{\phi}\astar + \alpha_{\phi} + \tilde{V}_{\phi})] - (\EX[\nu(\phi(X))] - \EX[Y])$
        \hfill\refstepcounter{equation}\textup{(\theequation)}\label{eq:thm4_1}
        \item
        $\forall x \in \Im(g_0), \astar \in \calA: \EX[Y | X = x, \doo(A = \astar)] = \nu(\phi(x)) + \psi(\phi(x) - (W_{\phi}\astar + \alpha_{\phi}))$.
        \hfill\refstepcounter{equation}\textup{(\theequation)}\label{eq:thm4_2}
    \end{enumerate}
\end{theorem}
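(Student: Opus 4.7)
The plan is to pin down, in order, (a) the regression primitives $W_\phi,\alpha_\phi,\tilde V_\phi$, (b) the additive components $\nu,\psi$ up to a single additive constant, and then (c) substitute into the two identities already derived in the paper.

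For (a), equation~\eqref{eq:phiX_linearA} gives $\phi(X)=M_\phi A+q_\phi+V_\phi$ almost surely, with $V_\phi\ci A$ (from $V\ci A$) and $\EX[V_\phi]=0$. Since $\Var(\epsilon_A)$ is full-rank, the population OLS problem \eqref{eq:defWphi} is uniquely solved at $(W_\phi,\alpha_\phi)=(M_\phi,q_\phi)$, so $\tilde V_\phi=V_\phi$. For (b), equation~\eqref{eq:additive_structure} asserts that $\EX[Y\mid\phi(X)=\omega,V_\phi=v]=(\ell\circ\kappa_\phi^{-1})(\omega)+\lambda_\phi(v)$ on $\supp((\phi(X),V_\phi))$; combined with the assumed additive representation $\nu(\omega)+\psi(v)$ for the same conditional expectation, this rearranges to $(\ell\circ\kappa_\phi^{-1}-\nu)(\omega)=(\psi-\lambda_\phi)(v)$ on that joint support. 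Since $\supp(V_\phi)=\R^d$ (inherited from $\supp(V)=\R^d$ and invertibility of $H_\phi$) and the interior of $\supp(A)$ is convex, the Newey--Powell additive-identification result of \citet{newey1999nonparametric} applies and both sides must equal a common constant $c\in\R$; hence $\nu=\ell\circ\kappa_\phi^{-1}+c$ and $\psi=\lambda_\phi-c$.

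To prove (i), I would start from \eqref{eq:control_fn_doA_2}, write $\ell=(\ell\circ\kappa_\phi^{-1})\circ\kappa_\phi$, and use $\kappa_\phi(M_0\astar+V)=W_\phi\astar+\alpha_\phi+\tilde V_\phi$ to obtain $\EX[Y\mid\doo(A=\astar)]=\EX[(\ell\circ\kappa_\phi^{-1})(W_\phi\astar+\alpha_\phi+\tilde V_\phi)]=\EX[\nu(W_\phi\astar+\alpha_\phi+\tilde V_\phi)]-c$. The constant is then pinned down by the identity $\EX[\nu(\phi(X))]=\EX[\ell(Z)]+c=\EX[Y]+c$ (using $\EX[U]=0$), giving $c=\EX[\nu(\phi(X))]-\EX[Y]$, which is exactly \eqref{eq:thm4_1}. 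For (ii), the last display of Section~\ref{sec:fully_observe_Z} gives $\EX[Y\mid Z=z,\doo(A=\astar)]=\ell(z)+\lambda(z-M_0\astar)$ with $\lambda(v)=\EX[U\mid V=v]$. Injectivity of $g_0$ replaces conditioning on $X=x\in\Im(g_0)$ by conditioning on $Z=\kappa_\phi^{-1}(\phi(x))$; the first term becomes $\nu(\phi(x))-c$, while $V_\phi=H_\phi(V-\EX[V])$ yields $\lambda(v)=\lambda_\phi(H_\phi(v-\EX[V]))$ and then, using $M_\phi=H_\phi M_0$ and $q_\phi=c_\phi+H_\phi\EX[V]$, the second term simplifies to $\lambda_\phi(\phi(x)-(W_\phi\astar+\alpha_\phi))=\psi(\phi(x)-(W_\phi\astar+\alpha_\phi))+c$. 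The $\pm c$ cancel and \eqref{eq:thm4_2} follows.

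The main obstacle is the Newey--Powell step in (b): one has to verify that the joint support of $(\phi(X),V_\phi)$, namely $\{(W_\phi a+\alpha_\phi+v,v):a\in\supp(A),\,v\in\R^d\}$, is rich enough that the functional equation $(\ell\circ\kappa_\phi^{-1}-\nu)(\omega)=(\psi-\lambda_\phi)(v)$ forces both sides to be constant — this is precisely where the differentiability of $\ell,\lambda_\phi$ and the convexity of the interior of $\supp(A)$ are consumed. Everything else reduces to bookkeeping with the affine map $\kappa_\phi$ and the offsets $c_\phi,\EX[V]$.
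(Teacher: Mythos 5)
Your proposal follows essentially the same route as the paper's proof: identify $(W_{\phi},\alpha_{\phi})$ with $(M_{\phi},q_{\phi})$ via the population least-squares expansion using $V_{\phi}\ci A$ and the full-rank covariance of $A$, invoke the Newey--Powell additive-identification result to pin down $\nu$ and $\psi$ up to a shared constant, eliminate that constant via $\EX[U]=0$, and for (ii) push the observed-$Z$ control-function identity through the bijection $v\mapsto H_{\phi}(v-\EX[V])$ so that the $\pm c$ terms cancel. This matches the paper's argument step for step (the paper merely recomputes the conditional expectation in (ii) directly rather than citing the Section~\ref{sec:fully_observe_Z} display), so the proposal is correct and not a genuinely different approach.
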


\section{Identification of the unmixing function $g^{-1}_0$}\label{sec:identification_unmix}
Theorem~\ref{thm:cf_identifiability} illustrates that intervention extrapolation can be achieved if one can identify the unmixing function $g^{-1}_0$ up to an affine transformation. In this section, we 
focus on the representation part (see Figure~\ref{fig:intro_CRL}, blue box) and prove that such an identification is possible. The identification relies on two key assumptions outlined in Setting~\ref{setting:1}: (i) the exogeneity of $A$ and (ii) the linearity of the effect of $A$ on $Z$. These two assumptions give rise to a conditional moment restriction on the residuals obtained from the linear regression of $g^{-1}_0(X)$ on $A$.  Recall that for all encoders $\phi: \calX \rightarrow \calZ$ we defined $(\alpha_{\phi}, W_{\phi}) \coloneqq \argmin_{\alpha \in \R^d, W \in \R^{d \times k}} \EX[\norm{\phi(X) - (WA + \alpha)}^2]$.
Under Setting~\ref{setting:1}, we have
\begin{equation}\label{eq:CMR_g0}
    \forall a \in \supp(A): \EX[g^{-1}_0(X) - (W_{g^{-1}_0}A + \alpha_{g_0^{-1}}) \mid A=a]  = 0.
\end{equation}
The conditional moment restriction \eqref{eq:CMR_g0} motivates us to introduce the notion of linear invariance of an encoder $\phi$ (with respect to $A$).
\begin{definition}[Linear invariance]
    Assume Setting~\ref{setting:1}. An encoder $\phi: \calX \rightarrow \calZ$ is said to be linearly invariant (with respect to $A$) if the following holds
    \begin{equation}\label{eq:linear_inv}
        \forall a \in \supp(A): \EX[\phi(X) - (W_{\phi}A + \alpha_{\phi}) \mid A = a] = 0. 
    \end{equation}
\end{definition}
To establish identifiability, we 
consider an encoder $\phi: \calX \rightarrow \calZ$ 
satisfying
the following constraints.
\begin{equation}\label{eq:ident_constraint}
    \text{(i) } \phi \text{ is linearly invariant} \qquad \text{and} \qquad \text{(ii) } \phig \text{ is bijective},
\end{equation}
where $\phig$ denotes the restriction of $\phi$ to the image of 
the mixing function $g_0$.
The second constraint (invertibility) rules out trivial solutions of the first constraint (linear invariance). For instance, a constant encoder $\phi: x \mapsto c$ (for some $c \in \R^d$) satisfies the linear invariance constraint but it clearly does not aff-identify $g^{-1}_0$. Theorem~\ref{thm:aff_identifiability} shows that, under the assumptions listed below, the constraints \eqref{eq:ident_constraint} are necessary and sufficient conditions for an encoder $\phi$ to aff-identify $g^{-1}_0$.
\begin{assumption}[Regularity conditions on $g_0$]\label{assm:regularity_g0}
    Assume Setting~\ref{setting:1}. The mixing function $g_0$ is differentiable and Lipschitz continuous.
\end{assumption}
\begin{assumption}[Regularity conditions on $V$]\label{assm:regularity_V}
    Assume Setting~\ref{setting:1}. First, the characteristic function of the noise variable $V$ has no zeros. Second, the distribution $\P_V$ admits a density $f_V$ \wrt Lebesgue measure such that $f_V$ is analytic on $\R^d$.
\end{assumption}
\begin{assumption}[Regularity condition on $A$]\label{assm:regularity_A}
    Assume Setting~\ref{setting:1}. The support of $A$, $\supp(A)$, contains a non-empty open subset of $\R^k$.
\end{assumption}
In addition to the injectivity assumed in Setting~\ref{setting:1}, Assumption~\ref{assm:regularity_g0} imposes further regularity conditions on the mixing function $g_0$.
As for Assumption~\ref{assm:regularity_V}, the first condition is satisfied, for example, when the distribution of $V$ is infinitely divisible. The second condition requires that the density function of $V$ can be locally expressed as
a convergent power series. 
Examples of such functions are the exponential functions, trigonometric functions,
and any linear combinations, compositions, and products of those. Hence, Gaussians and mixture of Gaussians are examples of distributions that satisfy Assumption~\ref{assm:regularity_V}.
Lastly, Assumption~\ref{assm:regularity_A} imposes a condition on the support of $M_0A$, that is, the support of $M_0 A$ has non-zero Lebesgue measure. 
These assumptions are closely related to the assumptions for bounded completeness in instrumental variable problems \citep{d2011completeness}. 

\begin{theorem}\label{thm:aff_identifiability}
    Assume Setting~\ref{setting:1} and Assumptions~\ref{assm:regularity_g0},~\ref{assm:regularity_V},~and~\ref{assm:regularity_A}. Let $\Phi$ be a class of functions from $\calX$ to $\calZ$ that are differentiable and Lipschitz continuous. It holds for all $\phi \in \Phi$ that
    \begin{equation}
        \phi \text{ satisfies } \eqref{eq:ident_constraint} \iff \phi \text{ aff-identifies } g_0^{-1}.
    \end{equation}
\end{theorem}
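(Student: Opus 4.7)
I would handle the two directions separately. The reverse implication is a short verification: if $\phi$ aff-identifies $g_0^{-1}$ via $(\phi \circ g_0)(z) = H_\phi z + c_\phi$, then $\phi(X) = H_\phi M_0 A + (c_\phi + H_\phi \EX[V]) + H_\phi(V - \EX[V])$. Since $V \ci A$, the best linear regression of $\phi(X)$ on $A$ has coefficients $W_\phi = H_\phi M_0$ and $\alpha_\phi = c_\phi + H_\phi \EX[V]$, with residual $H_\phi(V - \EX[V])$ independent of $A$ and of mean zero, i.e.\ linear invariance. Moreover $\phig = \kappa_\phi \circ g_0^{-1}|_{\Im(g_0)}$ is a composition of bijections, hence bijective.

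For the forward implication I would set $\tilde\phi := \phi \circ g_0: \R^d \to \R^d$, which is differentiable and Lipschitz by Assumption~\ref{assm:regularity_g0} and the hypothesis on $\Phi$. Since $\phi(X) = \tilde\phi(M_0 A + V)$ and $V \ci A$, linear invariance \eqref{eq:linear_inv} reads
\[
    \EX[\tilde\phi(M_0 a + V)] = W_\phi a + \alpha_\phi \qquad \text{for all } a \in \supp(A).
\]
By Assumption~\ref{assm:regularity_A} there is an open $U \subseteq \supp(A)$, and since $M_0$ has full row rank, $\Omega := M_0 U$ is open in $\R^d$. Perturbing $a \in U$ along directions in $\ker M_0$ keeps the left-hand side fixed but shifts the right-hand side by $W_\phi\delta$, so $W_\phi$ must vanish on $\ker M_0$; hence $W_\phi = B M_0$ for a unique $B \in \R^{d \times d}$, and the function $h(u) := \EX[\tilde\phi(u+V)]$ satisfies $h(u) = Bu + \alpha_\phi$ for every $u \in \Omega$.

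The next step extends this affine identity from $\Omega$ to all of $\R^d$. Combining the real analyticity of $f_V$ (Assumption~\ref{assm:regularity_V}) with the at-most-linear growth of $\tilde\phi$ (from Lipschitzness), a dominated-convergence argument applied to $h(u) = \int \tilde\phi(w)\, f_V(w - u)\,dw$ interchanges the Taylor expansion of $f_V(w - \cdot)$ with the integral and shows that $h$ is real analytic on $\R^d$. Since $h(u) - Bu - \alpha_\phi$ is analytic and vanishes on the open set $\Omega$, it vanishes on all of $\R^d$. Setting $\psi(z) := \tilde\phi(z) - Bz$, this gives that $\EX[\psi(u+V)] = \alpha_\phi - B\EX[V]$ is constant in $u \in \R^d$. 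For any $\eta \in \R^d$, the shift $\tau_\eta(z) := \psi(z+\eta) - \psi(z) = \tilde\phi(z+\eta) - \tilde\phi(z) - B\eta$ is bounded by Lipschitzness, and $\EX[\tau_\eta(u+V)] = 0$ for all $u \in \R^d$, i.e.\ $\tau_\eta * f_V^- \equiv 0$ with $f_V^-(v) := f_V(-v)$. The nowhere-vanishing characteristic function of $V$ then implies, via Wiener's Tauberian theorem, that the translates of $f_V^-$ are dense in $L^1(\R^d)$, so the bounded $\tau_\eta$ (orthogonal to each such translate in the $L^\infty$--$L^1$ pairing) must vanish. Hence $\psi$ is translation-invariant in every direction and thus constant, giving $\tilde\phi(z) = Bz + c$. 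Finally, $\tilde\phi$ is a bijection from $\R^d$ to $\R^d$ (since $g_0$ is injective and $\phig$ is bijective), forcing $B$ to be invertible, which is exactly aff-identification with $H_\phi = B$, $c_\phi = c$.

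\textbf{Main obstacle.} The delicate step is proving that $h$ is real analytic on all of $\R^d$: real analyticity of $f_V$ alone only provides a pointwise (and possibly shrinking) radius of convergence, so the integral-Taylor swap requires uniform control that combines the decay of the partial derivatives $\partial^k f_V$ with the at-most-linear growth of $\tilde\phi$. Once this extension is secured, the final completeness argument via Wiener's Tauberian theorem is routine.
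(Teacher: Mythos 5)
Your proof is correct; the reverse direction coincides with the paper's (least-squares computation plus bijectivity of $\phig$ as a composition of bijections). For the forward direction you take a genuinely different, derivative-free route. The paper differentiates the moment identity $\EX[(\phi\circ g_0)(M_0a+V)]=W_\phi a+\alpha_\phi$ in $a$, interchanges derivative and expectation via dominated convergence, and shows that the convolution of the \emph{bounded} matrix-valued function $h'(\cdot)M_0-W_\phi$ with $f_{-V}$ vanishes on an open set; analyticity of that convolution plus the identity theorem extends the vanishing to all of $\R^d$, and Wiener's Tauberian theorem then gives $h'(\cdot)M_0\equiv W_\phi$, hence $h'\equiv W_\phi M_0^{\dagger}$ constant. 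You instead never differentiate: you first extract $W_\phi=BM_0$ by perturbing $a$ inside the open set along $\ker M_0$, analytically continue the affine identity for $h(u)=\EX[\tilde\phi(u+V)]$ from $M_0U$ to $\R^d$, and then apply the same Wiener/completeness argument to the bounded translation increments $\tau_\eta$ to conclude $\tilde\phi$ is affine. The two arguments use identical analytic ingredients (identity theorem for an analytic convolution; density of translates of $f_{-V}$ in $L^1$ from the non-vanishing characteristic function) but apply them to different objects — the paper to the Jacobian, you to finite differences — so your version needs only Lipschitzness of $\phi\circ g_0$ where the paper also invokes differentiability. The price is that your analyticity step is applied to $h$ itself, whose integrand $\tilde\phi$ has linear growth rather than being bounded; you correctly flag this as the delicate point. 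It can be sidestepped by running the analytic-continuation argument on $u\mapsto h(u+\eta)-h(u)$, which is the convolution of a bounded function with $f_{-V}$ and equals $B\eta$ on an open set, putting you in exactly the paper's situation (which the paper itself settles with a citation to Rudin, so you are not being held to a higher standard there).
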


\begin{algorithm}[t!]
\caption{An algorithm for Rep4Ex}
\label{alg:extrapolation}
\SetAlgoLined
\KwIn{observations $(x_i, a_i, y_i)_{i=1}^n$, target interventions $(\astar_j)_{j=1}^m$, auto-encoder \texttt{AE}, additive regression  \texttt{AR}
}{
\tcp{Train the auto-encoder}
 $\phi = \texttt{AE}((x_i, a_i)_{i=1}^n)$ \;
\tcp{Regress $\phi(X)$ on $A$}
$(\hat{W}_{\phi}, \hat{\alpha}_{\phi}) = \argmin_{W, \alpha} \sum_{i=1}^n \norm{\phi(x_i) - (Wa_i +  \alpha)}^2$ \;
\tcp{Obtain the control variables}
\For{$i=1$ \KwTo $n$}{ 
    $v_i = \phi(x_i) - (Wa_i + \alpha)$ \;
    }
\tcp{Train additive regression}
$\hat{\nu}, \hat{\psi} = \texttt{AR}(y_i \sim \nu(\phi(x_i)) + \psi(v_i), i=1\ldots n)$ \;
\tcp{Estimate $\EX[Y | \doo(A = \astar)]$}
\For{$j=1$ \KwTo $m$}{ 
 $\hat{y}_{j} = \sum_{i=1}^n \hat{\nu}(\hat{W}_{\phi}\astar_j + \hat{\alpha}_{\phi} + v_i) - \sum_{i=1}^n \big(\hat{\nu}(\phi(x_i)) - y_i\big)$ 
 }
} 
\KwOut{$(\hat{y}_{j})_{j=1}^m$}
\end{algorithm}

\section{A method for tackling Rep4Ex}\label{sec:AE_reg}
\subsection{First-stage: auto-encoder with MMR regularization}\label{sec:algo_ae}
This section illustrates how to turn the identifiability result outlined in Section~\ref{sec:identification_unmix} into a practical method that implements the linear invariance and invertibility constraints in \eqref{eq:ident_constraint}. The method is based on an auto-encoder \citep{kramer1991nonlinear, goodfellow2016deep}
with a regularization term that enforces the linear invariance constraint \eqref{eq:linear_inv}. In particular, we adopt the the framework of maximum moment restrictions (MMRs) introduced in \cite{muandet2020kernel} as a representation of the constraint \eqref{eq:linear_inv}. MMRs can be seen as the reproducing kernel Hilbert space (RKHS) representations of conditional moment restrictions. Formally, let $\calH$ be the RKHS of vector-valued functions \citep{alvarez2012kernels} from $\calA$ to $\calZ$ with a reproducing kernel $k$ and define 
$\psi := \psi_{\P_{X,A}}: (x, a, \phi) \mapsto \phi(x) - (W_{\phi}a + \alpha_{\phi})$ (recall that $W_{\phi}$ and  $\alpha_{\phi}$ depend on the observational distribution $\P_{X,A}$). We can turn the conditional moment restriction in \eqref{eq:linear_inv} into the MMR as follows. Define the function
\begin{equation}
    Q(\phi) \coloneqq \sup_{h \in \calH, \norm{h} \leq 1} (\EX[\psi(X, A, \phi)^\top h(A)])^2.
\end{equation}
If the reproducing kernel $k$ is integrally strictly positive definite (see \citet[Definition~2.1]{muandet2020kernel}), then $Q(\phi) = 0$ if and only if the conditional moment restriction in \eqref{eq:linear_inv} is satisfied. 

One of the main advantages of using the MMR representation is that it can be written as a closed-form expression. We have by \citet[][Theorem~3.3]{muandet2020kernel} that
\begin{equation}
    Q(\phi) = \EX[\psi(X, A, \phi)^\top k(A, A^\prime) \psi(X^\prime, A^\prime, \phi)],
\end{equation}
where $(X^\prime, A^\prime)$ is an independent copy of $(X, A)$.

We now introduce our auto-encoder objective function\footnote{We consider a basic auto-encoder, but one can add MMR regularization to other variants too, e.g., \citep{kingma2013auto}, adversarial-based \citep{makhzani2015adversarial}, or diffusion-based \citep{preechakul2022diffusion}.} with the MMR regularization. Let $\phi: \calX \rightarrow \calZ$ be an encoder and $\eta: \calZ \mapsto \calX$ be a decoder. Our 
(population)
loss function is defined as
\begin{equation}\label{eq:auto_encoder_mmr_loss}
    \calL(\phi, \eta) \coloneqq \EX[\norm{X - \eta(\phi(X))}^2] + \lambda Q(\phi),
\end{equation}
where $\lambda$ is a regularization parameter. In practice, we parameterize $\phi$ and $\eta$ by neural networks, use a plug-in estimator\footnote{More precisely, we replace the expectations in~\eqref{eq:auto_encoder_mmr_loss} and~\eqref{eq:defWphi} by empirical means (the latter expression enters through $\psi$ and $Q(\phi)$).} for~\eqref{eq:auto_encoder_mmr_loss} to obtain an empirical loss function, and minimize
that loss
with a standard  (stochastic) gradient descent optimizer. Here, the role of the reconstruction loss part in \eqref{eq:auto_encoder_mmr_loss} is to enforce the bijectivity constraint of $\phig$ in \eqref{eq:ident_constraint}. The regularization parameter $\lambda$ controls the trade-off between minimizing the mean squared error (MSE) and satisfying the MMR. We discuss procedures to choose $\lambda$ in Appendix~\ref{app:reg_param}.

\subsection{Second-stage: control function approach}\label{sec:algo_cf}
Given a learned encoder $\phi$, we 
can now implement
the control function approach for estimating $\EX[Y | \doo(A=\astar)]$, as per Theorem~\ref{thm:cf_identifiability}. 
We call the procedure 
\texttt{Rep4Ex-CF}.
Algorithm~\ref{alg:extrapolation} outlines the details. In summary, we first perform the linear regression of $\phi(X)$ on $A$ to obtain $(\hat{W}_{\phi}, \hat{\alpha}_{\phi})$, allowing us to compute the control variables $\hat{V} = \phi(X) - (\hat{W}_{\phi}A - \hat{\alpha}_{\phi})$. Subsequently, we employ an additive regression model on $(\phi(X), \hat{V})$ to predict $Y$ and obtain the additive regression functions $\hat{\nu}$ and $\hat{\psi}$. Finally, using the function $\hat{\nu}$, we compute an empirical average of the expectation on the right-hand side of \eqref{eq:thm4_1}.

\section{Experiments}\label{sec:exp}
We now conduct simulation experiments to empirically validate our theoretical findings. First, we apply the MMR based auto-encoder introduced in Section~\ref{sec:algo_ae} and show in Section~\ref{sec:exp:unmix} that it can successfully recover the unmixing function $g^{-1}_0$ up to an affine transformation. Second, in Section~\ref{sec:exp:cf}, we apply the full \texttt{Rep4Ex-CF} procedure, that is, the MMR based auto-encoder along with the control function approach (see Section~\ref{sec:algo_cf}), to demonstrate that one can indeed predict previously unseen interventions as suggested by Theorem~\ref{thm:cf_identifiability}.

\subsection{Identifying the unmixing function $g^{-1}_0$}\label{sec:exp:unmix}

This section validates the result of affine identifiability 
, see 
Theorem~\ref{thm:aff_identifiability}.  We consider the SCMs %
\begin{equation}\label{eq:SCM_exp_unmix}
\calS(\alpha):
\begin{cases}
A\coloneqq \epsilon_A \\ 
Z\coloneqq \alpha M_0 A + V \\ 
X\coloneqq g_0(Z),
\end{cases}
\end{equation}
where $\epsilon_A \sim \text{Unif}(-1,1)$ and $V \sim N(0, \Sigma)$ are independent noise variables. Here, we consider a four-layer neural networks with Leaky ReLU activation functions as the mixing function $g_0$. The parameters of the neural networks and the parameters of the SCM \eqref{eq:SCM_exp_unmix} including $\Sigma$ and $M_0$ are randomly chosen, see Appendix~\ref{app:exp:add_dgp} for more details. The parameter $\alpha$ controls the strength of the effect of $A$ on $Z$. In this experiment, we set the dimension of $X$ to 10 and consider two choices $d \in \{2, 4\}$ for the dimension of $Z$. Additionally, we set the dimension of $A$ to the dimension of $Z$.

\begin{figure}[!t]
    \centering
    \includegraphics[width=1.\textwidth]{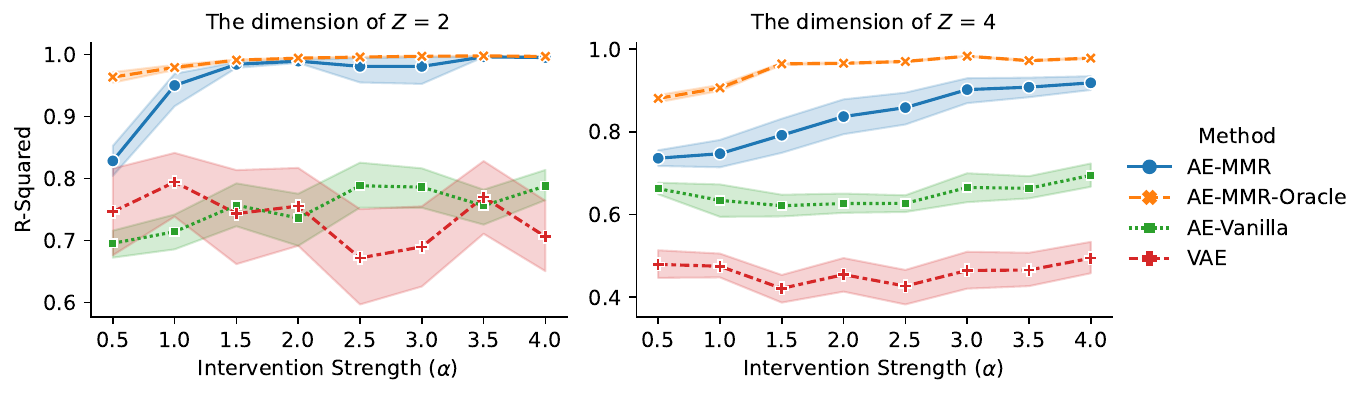}
    \caption{ R-squared values for different methods as the intervention strength ($\alpha$) increases. Each point represents an average over 20 repetitions, and the error bar indicates its 95\% confidence interval. \texttt{AE-MMR} yields an R-squared close to 1 as $\alpha$ increases, indicating its ability to aff-identify $g^{-1}_0$, while the two baseline methods yield significantly lower R-squared values.
    }\label{fig:simu_R2}
\end{figure}

We sample 1'000 observations from the SCM \eqref{eq:SCM_exp_unmix} and learn an encoder $\phi$ using the regularized auto-encoder (\texttt{AE-MMR}) as outlined in Section~\ref{sec:algo_ae}.
As our baselines, we include a vanilla auto-encoder (\texttt{AE-Vanilla}) and a variational auto-encoder (\texttt{VAE}) for comparison. 
We also consider an oracle model (\texttt{AE-MMR-Oracle}) where we train the encoder and decoder using the true latent predictors $Z$ and then use these trained models to initialize the regularized auto-encoder. 
We refer to Appendix~\ref{app:ae_details} for the details on the network and parameter choices.
Lastly, we consider identifiability of the unmixing function $g^{-1}_0$ 
only up to an affine transformation, see Definition~\ref{def:affine_ident}. 
To measure the quality of an estimate ${\phi}$, we therefore linearly regress the true $Z$ on the representation ${\phi}(X)$ and report the R-squared for each candidate method. 
This metric is justified by Proposition~\ref{remark:affine_ident}.

Figure~\ref{fig:simu_R2} illustrates the results with varying intervention strength ($\alpha$). As $\alpha$ increases, our method, \texttt{AE-MMR}, achieves higher R-squared values that appear to approach 1. This indicates that \texttt{AE-MMR} can indeed recover the unmixing function $g^{-1}_0$ up to an affine transformation. In contrast, the two baseline methods, \texttt{AE-Vanilla} and \texttt{VAE}, achieve significantly lower R-squared values, indicating non-identifiablity without enforcing the linear invariance constraint%
, see also the scatter plots in Figures~\ref{fig:3d_MMR} (\texttt{AE-MMR}) and~\ref{fig:3d_Vanilla} (\texttt{AE-Vanilla}) in Appendix~\ref{app:additionalexpresults}.

\subsection{Predicting previously unseen interventions}\label{sec:exp:cf}
In this section, we focus on the task of predicting previously unseen interventions as detailed in Section~\ref{sec:cf_hiddenZ}. 
We use the following SCM as data generating process.
\begin{equation}\label{eq:scm_exp_extrapolation}
\calS(\gamma):
\begin{cases}
A\coloneqq \epsilon^{\gamma}_A \\
Z\coloneqq M_0 A + V \\
X\coloneqq g_0(Z) \\ 
Y\coloneqq \ell(Z) + U,
\end{cases}
\end{equation}
where $\epsilon^{\gamma}_A \sim \text{Unif}([-\gamma, \gamma]^k)$ and $V \sim N(0, \Sigma_V)$ are independent noise variables. $U$ is then generated as $U\coloneqq h(V) + \epsilon_U$, where $\epsilon_U \sim N(0, 1)$. The parameter $\gamma$ determines the support of $A$ in the observational distribution. Similar to Section~\ref{sec:exp:unmix}, we consider a four-layer neural networks with Leaky ReLU activation functions as the mixing function $g_0$ and the parameters of $g_0$, $\Sigma_V$, and $M_0$ are randomly chosen as detailed in Appendix~\ref{app:exp:add_dgp}.

Our approach, denoted by \texttt{Rep4Ex-CF}, follows the procedure outlined in Algorithm~\ref{alg:extrapolation}. In the first stage, we employ \texttt{AE-MMR} as the regularized auto-encoder. In the second stage, we use a neural network that enforces additivity in the output layer for the additive regression model. For comparison, we include a neural-network-based regression model (\texttt{MLP}) of $Y$ on $A$ as a baseline. 
We also include an oracle method, \texttt{Rep4Ex-CF-Oracle}, where we use the true latent $Z$ instead of learning a representation in the first stage.
In all experiments within this section, we use a sample size of 10'000 observations.

\paragraph{One-dimensional $A$.}
In this setup, we consider one-dimensional $Z$ and $A$, and two-dimensional $X$. The functions $h$ and $\ell$ are specified as follows: $h: v \mapsto \frac{1}{5}v^3$ and $\ell: z \mapsto -2 z + 10\sin(z)$. Figure~\ref{fig:simu_extrapolation} presents the results obtained with three different $\gamma$ values (0.2, 0.7, and 1.2). As anticipated, the neural-network-based regression model (\texttt{MLP}) fails to extrapolate beyond the training support. Conversely, our approach, \texttt{Rep4Ex-CF}, demonstrates successful extrapolation, with increased performance for higher $\gamma$.

\paragraph{Multi-dimensional $A$.}
Here, we consider multi-dimensional $Z$ and $A$. Similar to Section~\ref{sec:exp:unmix}, we set the dimension of $X$ to 10, vary the dimension $d$ of $Z$, and keep the dimension of $A$ equal to that of $Z$. We specify the functions $h$ and $\ell$ using two-layer neural networks with the hyperbolic tangent activation functions. For the training distribution, we generate $A$ from a uniform distribution over $[-1, 1]^d$. To assess extrapolation performance, we generate 100 test points of $A$ from a uniform distribution over $[-3, -1]^d$ 
and calculate the mean squared error with respect to the true conditional mean. In addition to the baseline \texttt{MLP}, we also include an oracle method, denoted as \texttt{Rep4Ex-CF-Oracle}, where we directly use the true latent predictors $Z$ instead of learning a representation in the first stage. The outcomes for $d \in \{2, 4, 10\}$ are depicted in Figure~\ref{fig:simu_extrapolation_multi}. Across all settings, our proposed method, \texttt{Rep4Ex-CF}, consistently achieves markedly lower mean squared error compared to the baseline \texttt{MLP}. Furthermore, the performance of \texttt{Rep4Ex-CF} is on par with that of the oracle method \texttt{Rep4Ex-CF-Oracle}, indicating that the learned representations are close to the true latent predictors (up to an affine transformation).

\begin{figure}[!t]
    \centering
    \includegraphics[width=1\textwidth]{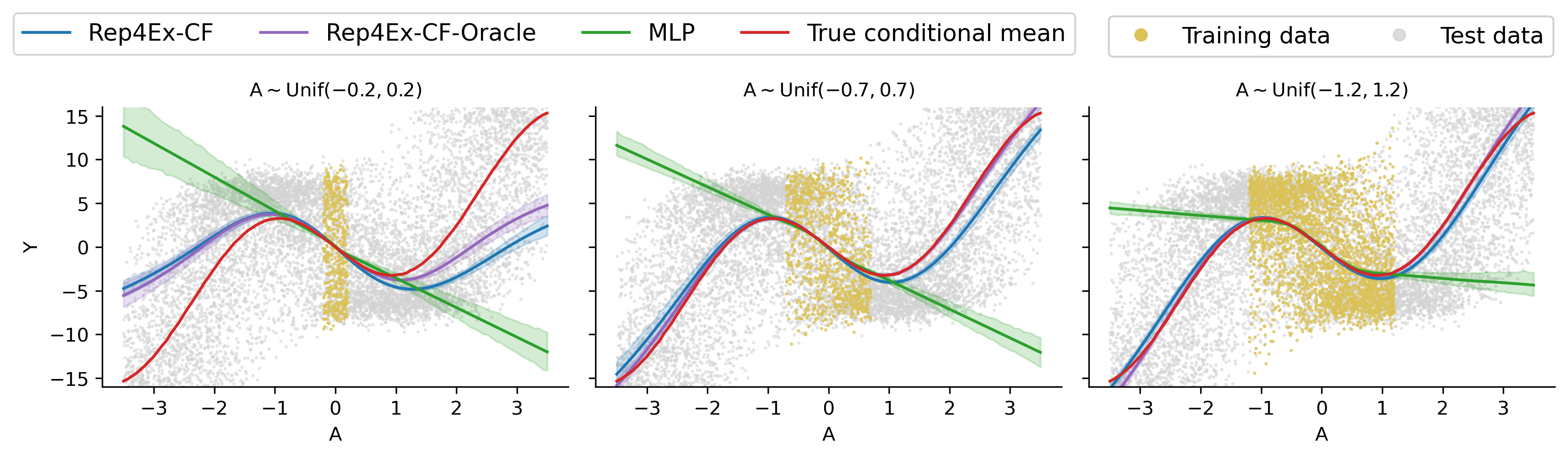}
    \caption{
    Different estimations of the target of inference $\EX[Y| \doo(A\coloneqq\cdot)]$ as the training support $\gamma$ increases. The error bars represent the 95\% confidence intervals over 10 repetitions. The training points displayed are subsampled for the purpose of visualization. \texttt{Rep4Ex-CF} demonstrates the ability to extrapolate beyond the training support, achieving nearly perfect extrapolation when $\gamma = 1.2$. In contrast, the baseline \texttt{MLP} shows clear limitations in its ability to extrapolate.
    }\label{fig:simu_extrapolation}
\end{figure}

\begin{figure}[!t]
    \centering
    \includegraphics[width=1\textwidth]{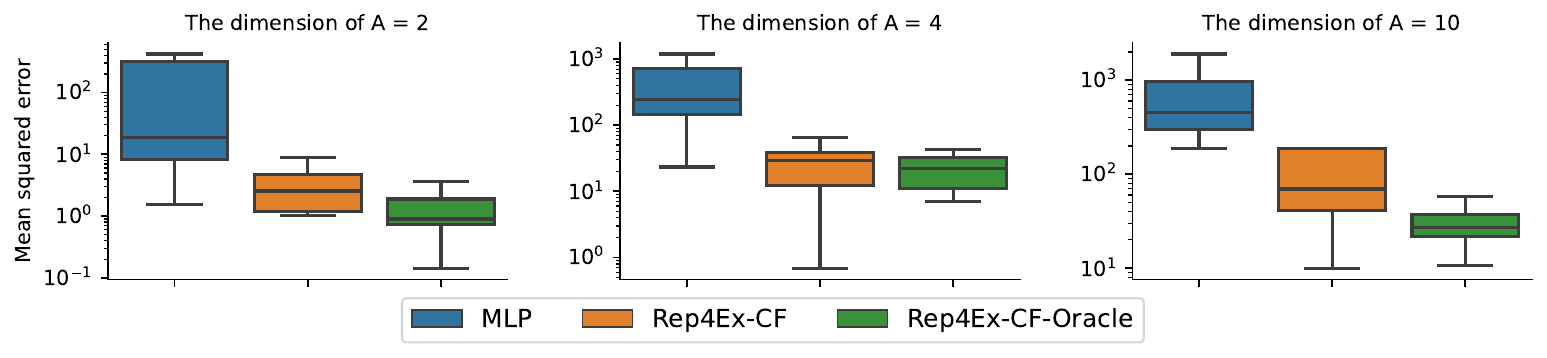}
    \caption{
    MSEs of different methods for three dimensionalities of $A$. The box plots illustrate the distribution of MSEs based on 10 repetitions.
    \texttt{Rep4Ex-CF} yields substantially lower MSEs in comparison to the baseline \texttt{MLP}. Furthermore, the MSEs achieved by \texttt{Rep4Ex-CF} are comparable to those of \texttt{Rep4Ex-CF-Oracle}, indicating the effectiveness of the representation learning stage.
    }\label{fig:simu_extrapolation_multi}
\end{figure}

\section{Conclusion}
Our work presents \texttt{Rep4Ex}, the task of learning a representation to perform nonlinear intervention extrapolation. We propose an approach to tackle \texttt{Rep4Ex} that consists of two steps: (1) learning an identifiable representations via the linear invariance constraint and (2) estimating $\EX[Y|\doo(A=\astar)]$ using the method of control functions. Key to our approach is the use of the linear invariance constraint given by the assumption of a linear influence from the exogenous variables $A$ on the latent predictors $Z$. We prove that, under certain assumptions, this constraint ensures identifiability of $Z$ up to an affine transformation -- which is sufficient for extrapolation in $A$. The results of synthetic experiments further support the validity of our theoretical findings.

\section*{Acknowledgments}
We thank Nicola Gnecco and Felix Schur for helpful discussions. NP was supported by a research grant (0069071) from
Novo Nordisk Fonden. ER and PR acknowledge the support of DARPA via FA8750-23-2-1015, ONR via N00014-23-1-2368, and NSF via IIS-1909816, IIS-1955532.

\bibliography{refs}

\newpage

\begin{appendices}

\section{Related work: nonlinear ICA}\label{app:nonlinear_ICA}

Identifiable representation learning has been studied within the framework of nonlinear ICA \citep[e.g.,][]{hyvarinen2016unsupervised, hyvarinen2019nonlinear, khemakhem2020variational, schell2023nonlinear}. \cite{khemakhem2020variational} provide a unifying framework that leverages the independence structure of latent variables $Z$ conditioned on auxiliary variables. Although our actions $A$ could be considered auxiliary variables, the identifiability results and assumptions in \cite{khemakhem2020variational} do not fit our setup and task. Concretely, a key assumption in their framework is that the components of $Z$ are independent when conditioned on $A$. In contrast, our approach permits dependence among the components of $Z$ even when conditioned on $A$ as the components of $V$ in our setting can have arbitrary dependencies. More importantly, \cite{khemakhem2020variational} provide identifiability up to point-wise nonlinearities which is not sufficient for intervention extrapolation. The main focus of our work is to provide an identification that facilitates a solution to the task of intervention extrapolation. Some other studies in nonlinear ICA have shown identifiability beyond point-wise nonlinearities \citep[e.g.,][]{roeder2021linear, ahuja2022towards}. However, the models considered in these studies are not compatible with our data generation process either.

\section{Remark on the key assumptions in Setting~\ref{setting:1}}\label{app:s1remark}
\begin{remark}
\label{remark:assumptions}
(i) The assumption of linearity from $Z$ on  $A$ can be relaxed: if there is a known nonlinear function $h$ such that $Z \coloneqq M_0 h(\tilde{A}) + V$, we can define $A \coloneqq h(\tilde{A})$
and obtain an instance of Setting~\ref{setting:1}. 
Similarly, if there is an injective $h$ such that 
$\tilde{Z} \coloneqq h(M_0 A + V)$
and
$X \coloneqq g_0(\tilde{Z})$,
we can define
$Z \coloneqq M_0 A + V$
and $X \coloneqq (g_0 \circ h)(Z)$.
(ii) The assumptions of full support of $V$ and full rank of $M_0$
can be relaxed by considering $\calZ \subseteq \mathbb{R}^d$ to be a linear subspace, with $\supp(V)$ and $M_0\calA$ both being equal to $\calZ$. 
(iii) Our experimental results in Appendix~\ref{app:robust_noise_X} suggest 
that it may be possible to relax the assumption of the absence of noise in $X$.
\end{remark}

\section{Some Lemmata}\label{app:tech_results}

\begin{lemma}\label{lemma:UZV}
    Assume the underlying SCM \eqref{eq:scm_knownZ}. We have that $U \ci Z \mid V$ under $\P^{\calS}$.
\end{lemma}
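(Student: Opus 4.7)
The plan is to use the independence structure of the noise variables together with the functional form $Z = M_0 A + V$.

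First, I would observe that since $A = \epsilon_A$ and, by hypothesis, $\epsilon_A \ci (V, U)$, we have $A \ci (V, U)$ under $\P^{\calS}$. A standard fact in probability is that full joint independence implies the corresponding conditional independence: $A \ci (V, U)$ implies $A \ci U \mid V$. (One way to see this: for any bounded measurable $f, g$, $\EX[f(A) g(U) \mid V] = \EX[f(A)]\,\EX[g(U)\mid V]$ using that the density factors as $p_{A,V,U} = p_A \cdot p_{V,U}$.)

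Next, I would use the structural equation $Z \coloneqq M_0 A + V$: conditionally on $V = v$, the random variable $Z$ is a deterministic (measurable) function of $A$, namely $Z = M_0 A + v$. Combined with $A \ci U \mid V$, the conditional independence is preserved under measurable transformations of $A$ (given $V$), so $Z \ci U \mid V$, which by symmetry is the claim $U \ci Z \mid V$.

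The main obstacle, if any, is just being careful that the step ``independence is preserved under conditional measurable transformations'' is invoked correctly. Concretely, one can verify it directly: for bounded measurable $f, g$,
\begin{equation*}
\EX[f(Z) g(U) \mid V] = \EX[f(M_0 A + V) g(U) \mid V] = \EX[f(M_0 A + V) \mid V]\,\EX[g(U)\mid V],
\end{equation*}
where the second equality uses $A \ci U \mid V$ (after absorbing the $V$-measurable shift into the function applied to $A$). Taking $f(Z) = \mathbbm{1}_{Z \in B}$ and $g(U) = \mathbbm{1}_{U \in C}$ gives $\P(Z \in B, U \in C \mid V) = \P(Z \in B \mid V)\,\P(U \in C \mid V)$, which is precisely $U \ci Z \mid V$. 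I do not anticipate any further technical difficulty.
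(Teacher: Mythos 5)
Your proposal is correct and follows essentially the same route as the paper: both derive $A \ci U \mid V$ from $A \ci (U,V)$ by weak union and then transfer this to $Z = M_0 A + V$ as a measurable function of $(A,V)$, differing only in whether conditional independence is expressed via the product formula for conditional expectations or via $\EX[g(A,V)\mid U,V]=\EX[g(A,V)\mid V]$. The only cosmetic caveat is that your density-based aside is unnecessary (densities are not assumed), but the weak-union step holds in full generality without it.
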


\begin{proof}
In the SCM~\eqref{eq:scm_knownZ} it holds that $A \ci (U,V)$ which by the weak union property of conditional independence \citep[e.g.,][Theorem~2.4]{constantinou2017extended} implies that $A\ci U\mid V$. This in turn implies $(A, V)\ci U\mid V$ \citep[e.g.,][Example~2.1]{constantinou2017extended}. Now, by Proposition~2.3 (ii) in \cite{constantinou2017extended} this is equivalent to the condition that for all measurable and bounded functions $g:\mathcal{A}\times\mathbb{R}:\rightarrow\mathbb{R}$ it almost surely holds that
\begin{equation}
\label{eq:ci_of_AUV}
    \EX[g(A, V)\mid U, V]=\EX[g(A, V)\mid V].
\end{equation}
Hence, for all $f:\mathcal{Z}\rightarrow\mathbb{R}$ measurable and bounded it almost surely holds that
\begin{align*}
    \EX[f(Z) \mid U, V] &= \EX[f(M_0A + V) \mid U, V] \\
    &= \EX[f(M_0A +V) \mid V] && \text{by \eqref{eq:ci_of_AUV} with $g:(a, v)\mapsto f(M_0a + v)$}\\
    &= \EX[f(Z) \mid V]. \numberthis \label{eq:lemma_UZV_1}
\end{align*}
Again by Proposition~2.3 (ii) in \cite{constantinou2017extended}, this is equivalent to $U\ci Z \mid V$ as desired. 
\end{proof}

\begin{lemma}\label{lemma:UZV_phi}
    Assume Setting~\ref{setting:1}. We have that $U \ci Z \mid V_{\phi}$.
\end{lemma}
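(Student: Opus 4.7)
The plan is to reduce Lemma~\ref{lemma:UZV_phi} to the already-established Lemma~\ref{lemma:UZV} by exploiting that $V_\phi$ is an invertible affine transformation of $V$. Recall from~\eqref{eq:MQV} that $V_\phi = H_\phi(V - \EX[V])$, where $H_\phi \in \R^{d\times d}$ is the invertible matrix coming from the affine identification of $g_0^{-1}$ in Definition~\ref{def:affine_ident}. In particular, $V = H_\phi^{-1} V_\phi + \EX[V]$, so $V$ and $V_\phi$ are measurable functions of one another and therefore generate the same $\sigma$-algebra.

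First I would observe that Lemma~\ref{lemma:UZV} applies verbatim in Setting~\ref{setting:1}: the derivation there used only the structural assignments $A \coloneqq \epsilon_A$, $Z \coloneqq M_0 A + V$, $Y \coloneqq \ell(Z)+U$ together with $\epsilon_A \ci (V,U)$, none of which depend on the additional assignment $X \coloneqq g_0(Z)$. Hence $U \ci Z \mid V$ holds under the SCM of Setting~\ref{setting:1} as well.

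Next I would invoke the standard fact that conditional independence is invariant under measurable bijections of the conditioning variable: if $f$ and $f^{-1}$ are measurable, then $U \ci Z \mid V$ is equivalent to $U \ci Z \mid f(V)$, since $\sigma(V) = \sigma(f(V))$. Applying this with $f(v) = H_\phi(v - \EX[V])$, whose inverse $v_\phi \mapsto H_\phi^{-1} v_\phi + \EX[V]$ exists because $H_\phi$ is invertible, yields $U \ci Z \mid V_\phi$.

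I do not anticipate any real obstacle: the only subtlety worth spelling out is the $\sigma$-algebra equality $\sigma(V) = \sigma(V_\phi)$, which is immediate from the invertibility of the affine map $\kappa_\phi$ guaranteed by Definition~\ref{def:affine_ident}. One could alternatively give a direct calculation mirroring the proof of Lemma~\ref{lemma:UZV}, substituting $V_\phi$ for $V$ in~\eqref{eq:ci_of_AUV} via the change of variables $(a,v) \mapsto (a, H_\phi^{-1} v_\phi + \EX[V])$, but the $\sigma$-algebra argument is cleaner and avoids any redundant computation.
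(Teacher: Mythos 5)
Your proposal is correct and follows essentially the same route as the paper, which simply notes that $v \mapsto H_{\phi}(v - \EX[V])$ is bijective and that the argument of Lemma~\ref{lemma:UZV} therefore carries over; your explicit appeal to $\sigma(V) = \sigma(V_{\phi})$ is just a cleaner way of stating the same reduction.
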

\begin{proof}
    Since the function $v \mapsto H_{\phi}(v - \EX[V])$ is bijective, the proof follows from the same arguments as given in the proof of Lemma~\ref{lemma:UZV}.
\end{proof}

\begin{lemma}\label{lemma:phi_bijection}
    Assume Setting~\ref{setting:1}. Let $\phi: \calX \rightarrow \calZ$ be an encoder. We have that
    \begin{equation}
        \phi \circ g_0 \text{ is bijective} \implies \phig \text{ is bijective }.
    \end{equation}
\end{lemma}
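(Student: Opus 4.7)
The plan is to exploit the fact that $g_0$ is injective by Setting~\ref{setting:1}, so $g_0$ viewed as a map $\calZ \to \Im(g_0)$ is a bijection. Let me denote its inverse by $g_0^{-1}: \Im(g_0) \to \calZ$. Then I would observe that, by the very definition of restriction, $\phig$ and $\phi$ agree on $\Im(g_0)$; in particular, for every $z \in \calZ$, $\phig(g_0(z)) = \phi(g_0(z))$, so as maps $\Im(g_0) \to \calZ$ we have the identity
\begin{equation*}
    \phig = (\phi \circ g_0) \circ g_0^{-1}.
\end{equation*}

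The conclusion then follows immediately from the fact that a composition of bijections is a bijection: $\phi \circ g_0: \calZ \to \calZ$ is bijective by hypothesis, and $g_0^{-1}: \Im(g_0) \to \calZ$ is bijective because $g_0$ is injective (so it bijects onto its image). Hence $\phig: \Im(g_0) \to \calZ$ is bijective as desired.

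There is no real obstacle here; the only thing to be careful about is domain/codomain bookkeeping, specifically that the codomain of the restricted map $\phig$ is still all of $\calZ$ (not some subset), which is exactly what makes the factorization above yield surjectivity and not merely injectivity. Everything else is a one-line composition argument, so the proof should be essentially immediate once the factorization is written down.
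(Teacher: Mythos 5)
Your proof is correct, and it takes a slightly more compact route than the paper's. The paper verifies injectivity and surjectivity of $\phig$ separately by elementwise arguments: injectivity by contradiction (two distinct points of $\Im(g_0)$ lift to distinct preimages under $g_0$, so $\phi \circ g_0$ would fail to be injective) and surjectivity by pulling each $z \in \calZ$ back through the surjectivity of $\phi \circ g_0$. You instead package both halves into the single factorization $\phig = (\phi \circ g_0) \circ g_0^{-1}$ on $\Im(g_0)$ and invoke that a composition of bijections is a bijection. Your version makes explicit use of the injectivity of $g_0$ from Setting~\ref{setting:1} (to form $g_0^{-1}$ on the image), which the paper's argument does not strictly need --- though this costs nothing, since the hypothesis that $\phi \circ g_0$ is injective already forces $g_0$ to be injective. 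Your remark about the codomain bookkeeping (that the restricted map still has codomain all of $\calZ$) is exactly the right point of care; with that in place the argument is complete.
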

\begin{proof}
    Let $\phi$ be an encoder such that $\phi \circ g_0$ is bijective. We first show that $\phig$ is injective by contradiction. Assume that $\phig$ is not injective. Then, there exist $x_1, x_2 \in \Im(g_0)$ such that $\phi(x_1) = \phi(x_2)$ and $x_1 \neq x_2$. Now 
    consider 
    $z_1, z_2 \in \calZ$ with $x_1 = g_0(z_1)$ and $x_2 = g_0(z_2)$; clearly, $z_1 \neq z_2$.
    Using that $\phi \circ g_0$ is injective, we have $(\phi \circ g_0)(z_1) = \phi(x_1) \neq \phi(x_2) = (\phi \circ g_0)(z_2)$ which leads to the contradiction. We can thus conclude that $\phig$ is injective. 

    Next, we show that $\phig$ is surjective. Let $z_1, z_2 \in \calZ$. Since $\phi \circ g_0$ is surjective, there exist $\tilde{z}_1, \tilde{z}_2 \in \calZ$ such that $z_1 = (\phi \circ g_0)(\tilde{z}_1)$ and $z_2 = (\phi \circ g_0)(\tilde{z}_2)$. Let $x_1 \coloneqq g_0(\tilde{z}_1) \in \Im(g_0)$ and $x_2 \coloneqq g_0(\tilde{z}_2) \in \Im(g_0)$. We then have that $z_1 = \phi(x_1)$ and $z_2 = \phi(x_2)$ which shows that $\phig$ is surjective and concludes the proof.
\end{proof}

\section{Proofs}\label{app:proofs}

\subsection{Proof of Proposition~\ref{prop:non-identi}}\label{proof:prop:non-identi}

\begin{proof}
    We consider $k=d=1$, that is, $A \in \R, Z \in \R, Y \in \R$. We define the function $p^1_V: \R \rightarrow \R$ for all $v\in\R$ by
\begin{equation*}
    p^1_V(v)=
    \begin{cases}
        \frac{1}{12} &\quad\text{if }v\in(-4, 2)\\
        \frac{1}{4}\exp(-(v-2)) &\quad\text{if }v\in (2, \infty)\\
        \frac{1}{4}\exp(v+4) &\quad\text{if }v\in (-\infty, -4)\\
    \end{cases}
\end{equation*}
and the function $p^2_V:\R\rightarrow\R$ for all $v\in\R$ by
\begin{equation*}
    p^2_V(v)=
    \begin{cases}
        \frac{1}{12} &\quad\text{if }v\in(-2, 1)\\
        \frac{1}{24} &\quad\text{if }v\in(-5, -2)\\
        \frac{5}{16}\exp(-(v-1)) &\quad\text{if }v\in (1, \infty)\\
        \frac{5}{16}\exp(v+5) &\quad\text{if }v\in (-\infty, -5).
    \end{cases}
\end{equation*}
These two functions are valid densities as we have for all $v \in \R$ that $p^1_V(v) > 0$, $\forall v \in \R: p^2_V(v) > 0$, and $\int_{-\infty}^{\infty} p^1_V(v) \,dv = 1$, $\int_{-\infty}^{\infty} p^2_V(v) \,dv = 1$.
Furthermore, these two densities $p^1_V(v)$ and $p^2_V(v)$ satisfy the following conditions,
\begin{itemize}
    \item[(1)] for all $a \in (0, 1)$, it holds that
    \begin{equation}\label{eq:proof:pv2cond1}
        \int_{a - 1}^{a +1} p^1_V(v) \,dv = \frac{1}{6} = \int_{a - 2}^{a} p^2_V(v) \,dv,
    \end{equation}
    \item[(2)] for all $a \in (-3, -2)$ the following holds
    \begin{align*}
        \int_{a - 1}^{a +1} p^1_V(v) \,dv &= \int_{a - 1}^{a +1} \frac{1}{4}\exp(v+4) \,dv  \\
        &\geq \frac{1}{2}\exp((a - 1)+4)  \\
        &\geq \frac{1}{2} \\
        &> \frac{1}{12} \\
        &= \int_{a - 2}^{a} p^2_V(v) \,dv. \numberthis \label{eq:proof:pv2cond2}
    \end{align*}
\end{itemize}
Next, let $\calS_1$ be the following SCM
\begin{equation}
    \calS_1:\quad
    \begin{cases}
    A\coloneqq \epsilon_A \\
    Z\coloneqq -A + V \\
    Y \coloneqq \ind(\abs{Z} \leq 1) + U,
    \end{cases}
\end{equation}
where $\epsilon_A \sim \text{Uniform(0,1)}$, $V \sim \P^1_V$, $U \sim \P^1_U$ independent such that $\epsilon_A \ci (V,U)$, and $\EX[U] = 0$. 
Further, we assume that $V$ admits a density $p^1_V$ as defined above. 

Next, we define the second SCM $\calS_2$ as follows
\begin{equation}
    \calS_2:\quad
    \begin{cases}
    A\coloneqq \epsilon_A \\
    Z\coloneqq -A + V \\
    Y \coloneqq \ind(\abs{Z + 1} \leq 1) + U,
    \end{cases}
\end{equation}
where $\epsilon_A \sim \text{Uniform(0,1)}$, $V \sim \P^2_V$, $U \sim \P^2_U$ independent such that $\epsilon_A \ci (V,U)$, $\EX[U] = 0$ and $V$ has the density given by $p^2_V$. By construction we have that $\supp^{\calS_1}(V) = \supp^{\calS_2}(V) = \R$ and $\supp^{\calS_1}(A) = \supp^{\calS_2}(A)$. Now, we show that the two SCMs $\calS_1$ and $\calS_2$ satisfy the third statement of Proposition~\ref{prop:non-identi}. Define $c_1 = 0$ and $c_2 = 1$. For $i \in \{1,2\}$, we have for all $a \in \R$ that
\begin{align*}
    \EX^{\calS_i}[Y \mid \doo(A = a)] &= \EX^{\calS_i}[\ind(\abs{Z + c_i} \leq 1) \mid \doo(A = a)] + \EX^{\calS_i}[U | \doo(A = a)] \\
    &= \EX^{\calS_i}[\ind(\abs{V - a + c_i} \leq 1) \mid \doo(A = a)] + \EX^{\calS_i}[U | \doo(A = a)]  \\ 
    \overset{(*)}&{=} \EX^{\calS_i}[\ind(\abs{V - a + c_i} \leq 1) \mid \doo(A = a)] \\
    \overset{(**)}&{=} \EX^{\calS_i}[\ind(\abs{V - a + c_i} \leq 1)] \\
    &= \int_{-\infty}^{\infty} \ind(\abs{v - a + c_i} \leq 1) p^{i}_V(v) \,dv, \numberthis \label{eq:proof:non-identi_1}
\end{align*}
where $(*)$ holds because $\forall a \in \calA: \P_U = \P^{\doo(A=a)}_U$ and $\EX^{\calS_i}[U] = 0$ and $(**)$ holds because $\forall a \in \calA: \P_V = \P^{\doo(A=a)}_V$. Since $A$ is exogenous, we have for all $i \in \{1,2\}$ and $a \in \supp^{\calS_1}(A) = (0,1)$ that $\EX^{\calS_i}[Y \mid \doo(A = a)] = \EX^{\calS_i}[Y \mid A=a]$. From \eqref{eq:proof:non-identi_1}, we therefore have for all $a \in (0, 1)$
\begin{align*}
    \EX^{\calS_1}[Y \mid A = a] &= \int_{-\infty}^{\infty} \ind(\abs{v - a} \leq 1)p^1_V(v) \,dv \\
    &= \int_{a - 1}^{a + 1} p^1_V(v) \,dv \\
    &= \int_{a - 2}^{a} p^2_V(v) \,dv && \text{by \eqref{eq:proof:pv2cond1}} \\
    &= \int_{-\infty}^{\infty} \ind(\abs{v - a + 1} \leq 1)p^2_V(v) \,dv \\
    &= \EX^{\calS_2}[Y \mid A = a].
\end{align*}
We have shown that the two SCMs $\calS_1$ and $\calS_2$ satisfy the first statement of Proposition~\ref{prop:non-identi}. Lastly, we show below that they also satisfy the fourth statement of Proposition~\ref{prop:non-identi}. 
Define $\calB \coloneqq (-3, -2) \subseteq \R$ which has positive measure. From \eqref{eq:proof:non-identi_1}, we then have for all $a \in (-3, -2)$
\begin{align*}
    \EX^{\calS_1}[Y \mid \doo(A = a)] &= \int_{-\infty}^{\infty} \ind(\abs{v - a} \leq 1)p^1_V(v) \,dv \\
    &= \int_{a - 1}^{a + 1} p^1_V(v) \,dv \\
    &\neq \int_{a - 2}^{a} p^2_V(v) \,dv && \text{by \eqref{eq:proof:pv2cond2}} \\
    &= \int_{-\infty}^{\infty} \ind(\abs{v - a + 1} \leq 1)p^2_V(v) \,dv \\
    &= \EX^{\calS_2}[Y \mid \doo(A = a)],
\end{align*}
which shows that $\calS_1$ and $\calS_2$ satisfy the forth condition of Proposition~\ref{prop:non-identi} and concludes the proof. 
\end{proof}

\subsection{Proof of Proposition~\ref{remark:affine_ident}}\label{proof:remark:affine_ident}

\begin{proof}
We begin by showing the `only if' direction. Let $\phi: \calX \rightarrow \calZ$ be an encoder that aff-identifies $g^{-1}_0$. Then, by definition, there exists an invertible matrix $H_{\phi} \in \R^{d \times d}$ and a vector $c_{\phi} \in \R^d$ such that
    \begin{equation}
    \forall z \in \calZ: (\phi \circ g_0)(z) = H_{\phi}z + c_{\phi}.
    \end{equation}
We then have that
\begin{equation}
    \forall z \in \calZ: z = H_{\phi}^{-1}\phi(x) - H_{\phi}^{-1}c_{\phi}, \quad \text{where } x \coloneqq g_0(z),
\end{equation}
which shows the required statement.

Next, we show the `if' direction. Let $\phi: \calX \rightarrow \calZ$ be an encoder for which there exists a matrix $J_{\phi} \in \R^{d \times d}$ and a vector $d_{\phi} \in \R^d$ such that
\begin{equation}
\forall z \in \calZ: z = J_{\phi}\phi(x)  + d_{\phi}, \quad \text{where } x \coloneqq g_0(z).
\end{equation}
Since $\calZ = \R^d$, this implies that $J_{\phi}$ is surjective and thus has full rank. We therefore have that
\begin{equation}
\forall z \in \calZ: (\phi \circ g_0)(z) = J_{\phi}^{-1}z - J_{\phi}^{-1}d_{\phi},
\end{equation}
which shows the required statement and concludes the proof.

\end{proof}

\subsection{Proof of Theorem~\ref{thm:cf_identifiability}}\label{proof:thm:cf_identifiability}

\begin{proof}

Let $\kappa_{\phi} = z \mapsto H_{\phi}z + c_{\phi}$ be the corresponding affine map of $\phi$. From \eqref{eq:control_fn_doA_3}, we have for all $\astar \in \calA$, that 
    \begin{equation}
        \EX[Y | \doo(A = \astar)] = \EX[(\ell \circ \kappa_{\phi}^{-1})(M_{\phi}\astar + q_{\phi} + V_{\phi})],
    \end{equation}
where $M_{\phi} = H_{\phi} M_0$, $q_{\phi} = c_{\phi} + H_{\phi}\EX[V]$, and $V_{\phi} = H_{\phi}(V - \EX[V])$ as defined in \eqref{eq:MQV}.
To prove the first statement, we thus aim to show that, for all $\astar \in \calA$, 
\begin{equation} \label{eq:firststate}
    \EX[\nu(W_{\phi}\astar + \alpha_{\phi} + \tilde{V}_{\phi})]  - (\EX[\nu(\phi(X))] - \EX[Y]) = \EX[(\ell \circ \kappa_{\phi}^{-1})(M_{\phi}\astar + q_{\phi} + V_{\phi})].
\end{equation}
To begin with, we show that $W_{\phi} = M_{\phi}$ and $\alpha_{\phi} = q_{\phi}$. We have for all $\alpha \in \R^d, W \in \R^{d \times d}$
\begin{align*}
    \EX&[\norm{\phi(X) - (WA + \alpha)}^2] \\
    &= \EX[\norm{M_{\phi}A + q_{\phi} + V_{\phi} - \alpha - WA}^2] && \text{from \eqref{eq:phiX_linearA}} \\
    &= \EX[\norm{(M_{\phi} - W)A + (q_{\phi} - \alpha) + V_{\phi}}^2] \\
    &= \EX[\norm{(M_{\phi} - W)A + (q_{\phi} - \alpha)}^2] \\
    & \qquad \qquad + 2\EX[((M_{\phi} - W)A + (q_{\phi} - \alpha))^\top V_{\phi}] + \EX[\norm{V_{\phi}}^2] \\
    &= \EX[\norm{(M_{\phi} - W)A + (q_{\phi} - \alpha)}^2] + \EX[\norm{V_{\phi}}^2]. && \text{since $A \ci V_{\phi}$ and $\EX[V_{\phi}] = 0$}
\end{align*}
Since the covariance matrix of $A$ has full rank, we therefore have that 
\begin{equation}\label{eq:proof:thm1:lsq}
(\alpha_{\phi}, W_{\phi}) = \argmin_{\alpha \in \R^d, W \in \R^{d \times k}} \EX[\norm{\phi(X) - \alpha - WA}^2] = (q_{\phi}, M_{\phi}),
\end{equation}
and that $\tilde{V}_{\phi} = \phi(X) - (M_{\phi}A + q_{\phi}) = V_{\phi}$, where the last equality holds by  \eqref{eq:phiX_linearA}.

Next, we show that $\nu \equiv (\ell \circ \kappa^{-1}_{\phi})$. Since $\ell$ is differentiable, the function $\ell \circ \kappa^{-1}_{\phi}$ is also differentiable. We have $\supp(A, V_{\phi}) = \supp(A, V) = \supp(A) \times \mathbb{R}^d$. Thus, the interior of $\supp(A, V_{\phi})$ is convex (as the interior of $\supp(A)$ is convex) and its boundary has measure zero.
Also, the matrix $M_0$ has full row rank. Moreover, using aff-identifiability and \eqref{eq:reduced_scm} we can write
\begin{align*}
    \phi(X)&=M_{\phi}A+q_{\phi}+V_{\phi}\\
    Y&=\ell\circ \kappa_{\phi}^{-1}(\phi(X))+U,
\end{align*}
where $A\ci (V_{\phi}, U)$. This is a simultaneous equation model (over the observed variables $\phi(X)$, $A$, and $Y$) for which the structural function is $\ell\circ \kappa_{\phi}^{-1}$ and the control function is $\lambda_{\phi}$.
We 
can therefore 
apply
Theorem~2.3 in \citet{newey1999nonparametric} 
(see \citet[][Proposition~3]{Gnecco2023} for a complete proof, including usage of convexity, which we believe is missing in the argument of \citet{newey1999nonparametric}) to conclude that $\ell \circ \kappa^{-1}_{\phi}$ and $\lambda_{\phi}$ are identifiable from \eqref{eq:additive_structure} up to a constant. That is,
\begin{equation}\label{eq:proof:thm1:nu_delta}
    \nu \equiv (\ell \circ \kappa^{-1}_{\phi}) + \delta \qquad \text{ and } \qquad \psi \equiv \lambda_{\phi} - \delta
\end{equation}
for some constant $\delta \in \R$. Combining with the fact that $W_{\phi} = M_{\phi}$ and $\alpha_{\phi} = q_{\phi}$, we then have, for all $\astar \in \calA$, 
\begin{equation}\label{eq:proof:thm1:Enu_up to_delta}
    \EX[\nu(W_{\phi}\astar + \alpha_{\phi} + \tilde{V}_{\phi})] = \EX[(\ell \circ \kappa_{\phi}^{-1})(M_{\phi}\astar + q_{\phi} + V_{\phi})] + \delta.
\end{equation}
Now, we use the assumption that $\EX[U] = 0$ to deal with the constant term $\delta$.
\begin{align}
    \EX[Y] &= \EX[\ell(g_0^{-1}(X))] && \text{since $\EX[U] = 0$} \\
    &= \EX[((\ell \circ \kappa^{-1}_{\phi}) \circ (\kappa_{\phi} \circ g_0^{-1}))(X)] \\
    &= \EX[(\ell \circ \kappa^{-1}_{\phi})(\phi(X))] && \text{since $\phi$ aff-identifies $g^{-1}_0$.} \numberthis \label{eq:proof:thm1:EY}
\end{align}
Thus, we have
\begin{align*}
    \EX[\nu(\phi(X))] - \EX[Y] &= \EX[(\ell \circ \kappa^{-1}_{\phi})(\phi(X)) + \delta] - \EX[Y] && \text{by \eqref{eq:proof:thm1:nu_delta}} \\
    &= \EX[(\ell \circ \kappa^{-1}_{\phi})(\phi(X)) + \delta] - \EX[(\ell \circ \kappa^{-1}_{\phi})(\phi(X))] && \text{by \eqref{eq:proof:thm1:EY}} \\
    &= \delta. \numberthis \label{eq:proof:thm1:EY-Enu}
\end{align*}
Combining \eqref{eq:proof:thm1:EY-Enu} and \eqref{eq:proof:thm1:Enu_up to_delta}, we have for all $\astar \in \calA$ that
\begin{equation*}
    \EX[\nu(W_{\phi}\astar + \alpha_{\phi} + \tilde{V}_{\phi})] - (\EX[\nu(\phi(X))] - \EX[Y]) = \EX[(\ell \circ \kappa_{\phi}^{-1})(M_{\phi}\astar + q_{\phi} + V_{\phi})],
\end{equation*}
which yields \eqref{eq:firststate} and concludes the proof of the first statement.

Next, we prove the second statement. We have for all $x \in \Im(g_0)$ and $\astar \in \calA$, that
\begin{align*}
    \EX[Y | X = x, \doo(A = \astar)] &= \EX[\ell(Z) \mid X = x, \doo(A = \astar)] + \EX[U | X = x, \doo(A = \astar)] \\
    &= (\ell \circ g^{-1}_0)(x) + \EX[U | X = x, \doo(A = \astar)] \\
    &= (\ell \circ g^{-1}_0)(x) + \EX[U | g_0(Z) = x, \doo(A = \astar)] \\
    &= (\ell \circ g^{-1}_0)(x) + \EX[U | g_0(M_0 \astar + V) = x, \doo(A = \astar)] \\
    &= (\ell \circ g^{-1}_0)(x) + \EX[U | V = g^{-1}_0(x) - M_0 \astar, \doo(A = \astar)] \\
    \overset{(*)}&{=} (\ell \circ g^{-1}_0)(x) + \EX[U | V = g^{-1}_0(x) - M_0 \astar] \\
    &= ((\ell \circ \kappa^{-1}_{\phi}) \circ (\kappa_{\phi} \circ g^{-1}_0))(x) + \EX[U | V = g^{-1}_0(x) - M_0 \astar] \\
    \overset{(**)}&{=} (\ell \circ \kappa^{-1}_{\phi})(\phi(x)) + \EX[U | V = g^{-1}_0(x) - M_0 \astar] 
    \numberthis \label{eq:proof:thm1_XdoA}
\end{align*}
where the equality $(*)$ hold because $\forall \astar \in \calA: \P_{U,V} = \P_{U, V}^{\doo(A = \astar)}$ and $(**)$ follows from the fact that $\phi$ aff-identifies $g^{-1}_0$. Next, define $h \coloneqq v \mapsto H_{\phi}(v - \EX[V])$. We have for all $x \in \Im(g_0)$ and $\astar \in \calA$ that
\begin{align*}
    h(g^{-1}_0(x) - M_0 \astar) &= H_{\phi}(g^{-1}_0(x) - M_0 \astar - \EX[V]) \\
    &=  H_{\phi}g^{-1}_0(x) - H_{\phi}M_0 \astar - H_{\phi}\EX[V] \\
    &=  H_{\phi}g^{-1}_0(x) + c_{\phi} - (M_{\phi}\astar + q_{\phi}) \\
    &= (\phi \circ g_0 \circ g^{-1}_0(x)) - (M_{\phi}\astar + q_{\phi}) \\
    &= \phi(x) - (M_{\phi}\astar + q_{\phi}) \\
    &= \phi(x) - (W_{\phi}\astar + \alpha_{\phi}). && \text{from \eqref{eq:proof:thm1:lsq}} \numberthis \label{eq:proof:thm1_h}
\end{align*}
Since the function $h$ is bijective, combining \eqref{eq:proof:thm1_h} and \eqref{eq:proof:thm1_XdoA} yields
\begin{align*}
    \EX[Y | X = x, \doo(A = \astar)] 
    &= (\ell \circ \kappa^{-1}_{\phi})(\phi(x)) + \EX[U | h(V) = h(g^{-1}_0(x) - M_0 \astar)] \\ 
    &=(\ell \circ \kappa^{-1}_{\phi})(\phi(x)) + \EX[U | V_{\phi} = \phi(x) - (W_{\phi}\astar + \alpha_{\phi})] \\
    &= (\ell \circ \kappa^{-1}_{\phi})(\phi(x)) + \lambda_{\phi}(\phi(x) - (W_{\phi}\astar + \alpha_{\phi})).
\end{align*}
Lastly, as argued in the first part of the proof, it holds from Theorem~2.3 in \citet{newey1999nonparametric} that $\nu \equiv (\ell \circ \kappa^{-1}_{\phi}) + \delta$ and $\psi \equiv \lambda_{\phi} - \delta$, for some constant $\delta \in \R$. We thus have that
\begin{equation*}
    \forall x \in \Im(g_0), \astar \in \calA: \EX[Y | X = x, \doo(A = \astar)] = \nu(\phi(x)) + \psi(\phi(x) - (W_{\phi}\astar + \alpha_{\phi})),
\end{equation*}
which concludes the proof of the second statement.
\end{proof}

\subsection{Proof of Theorem~\ref{thm:aff_identifiability}}\label{proof:thm:aff_identifiability}
\begin{proof}
    We begin the proof by showing the forward direction ($\phi \text{ satisfies } \eqref{eq:ident_constraint} \implies \phi \text{ satisfies } \eqref{eq:affine_ident})$. Let $\phi \in \Phi$ be an encoder that satisfies \eqref{eq:ident_constraint}. We then have for all $a \in \supp(A)$ %
\begin{align*}
    W_{\phi} a + \alpha_{\phi} &= \EX[\phi(X) \mid A=a] \\
    &= \EX[(\phi \circ g_0)(M_0 A + V) \mid A=a] \\
    &= \EX[(\phi \circ g_0)(M_0 a + V)] && \text{since $A \ci V$.}
\end{align*}
Define $h \coloneqq \phi \circ g_0$. Taking derivative with respect to $a$ on both sides yields
\begin{equation*}
     W_{\phi} = \pdv{\EX[h(M_0a + V)]}{a}.
\end{equation*}
Next, we interchange the expectation and derivative using the assumptions
that $\phi$ and $g_0$ have bounded derivative 
and the dominated convergence theorem. We have for all $a \in \supp(A)$
\begin{align*}
     W_{\phi} &= \EX[\pdv{h(M_0a + V)}{a}] \\
    &= \EX\big[\pdv{h(u)}{u}\bigg\rvert_{u = M_0a + V} \pdv{(M_0a + V)}{a} \big] && \text{by the chain rule} \\
    &= \EX\big[\pdv{h(u)}{u}\bigg\rvert_{u = M_0a + V}M_0\big]. \numberthis \label{eq:after_chain_rule}
\end{align*}
Defining $h^{\prime}: z \mapsto \pdv{h(u)}{u}\bigg\rvert_{u = z}$ and $g: z \mapsto h^{\prime}(z)M_0 - W_{\phi}$, we have for all $a \in \supp(A)$
\begin{align*}
    0 &= \EX[h^{\prime}(M_0a + V)M_0 - W_{\phi}] \\
    &= \EX[g(M_0a + V)] \\
    &= \int g(M_0 a + v) f_V(v) d\,v.
\end{align*}
Define $t \coloneqq M_0a \in \mathbb{R}^d$ and $\tau \coloneqq t + v$, we then have for all $t \in \supp(M_0 A)$ that
\begin{align*}
 0 &= \int g(\tau)f_V(\tau - t) d\,(\tau - t) \\
 &= \int g(\tau)f_V(\tau - t) d\,\tau \\
 &= \int g(\tau)f_{-V}(t - \tau) d\,\tau.
 \numberthis \label{eq:convol}
\end{align*}
Recall that $g$ is a function from $\R^d$ to $\R^{d \times k}$. Now, for an arbitrary $(i,j) \in \mathbb{R}^d \times \mathbb{R}^k$ define the function $g_{ij}(\cdot): \R^d \rightarrow \R  \coloneqq g(\cdot)_{ij}$. We then have for each element $(i,j)$ and all $t \in \supp(M_0A)$ that
\begin{equation}\label{eq:convol_ij}
    0 = \int g_{ij}(\tau)f_{-V}(t - \tau)d\,\tau.
\end{equation}
Next, let us define $c_{ij}: t \in \R^d \mapsto \int g_{ij}(\tau)f_{-V}(t - \tau)d\,\tau \in \mathbb{R}$. We now show that $c_{ij} \equiv 0$ where we adapt the proof of \citet[Proposition~2.3]{d2011completeness}.
By Assumption~\ref{assm:regularity_V}, $f_{-V}$ is analytic on $\R^d$, we thus have for all $\tau \in \mathbb{R}^d$ that the function $t \mapsto g_{ij}(\tau)f_{-V}(t - \tau)$ is analytic on $\R^d$. Moreover, since $g_{ij}$ is bounded the function $t \mapsto g_{ij}(\tau)f_{-V}(t - \tau)$ is bounded, too.  Thus, by \citep[][page 229]{rudin1987real}, the function $c_{ij}$ is then also analytic on $\R^d$. Using that $M_0$ is surjective, we have by the open mapping theorem (see e.g., \cite{buhler2018functional}, page 54) that $M_0$ is an open map. Now, since $\supp(A)$ contains a non-empty open subset of $\R^k$ and $M_0$ is an open map, we thus have from \eqref{eq:convol_ij} that $c_{ij}(t) = 0$ on a non-empty open subset of $\R^d$. Then, by 
the identity theorem, 
the function $c_{ij}$ is identically zero, that is,
\begin{equation}\label{eq:c_ij_zero}
    c_{ij} \equiv 0.
\end{equation}
Next, we show that $g_{ij} \equiv 0$. Let $L^1$ denote the space of equivalence classes of integrable functions from $\mathbb{R}^d$ to $\mathbb{R}$. For all $t \in \R^d$, let us define $f_t(\cdot) \coloneqq f_{-V}(t - \cdot)$ and $Q \coloneqq \{ f_t \given t \in \R^d \}$.
 By Assumption~\ref{assm:regularity_V}, the characteristic function of $V$ does not vanish. This implies that the characteristic function of $-V$ does not vanish either (since the characteristic function of $-V$ is the complex conjugate of the characteristic function of $V$). We therefore have that the Fourier transform of $f_{-V}$ has no real zeros. Then, we apply Wiener's Tauberian theorem \citep{wiener1932tauberian} and have that $Q$ is dense in $L^1$. Using that $Q$ is dense in $L^1$, combining with \eqref{eq:c_ij_zero} and the continuity of the linear form 
 $\tilde{\phi} \in L^1 \mapsto \int g_{ij}(\tau)\tilde{\phi}(\tau)d\,\tau$ (continuity follows from boundedness of $g_{ij}$ and Cauchy-Schwarz), it holds that
 \begin{equation}\label{eq:L1_zero}
     \forall \tilde{\phi} \in L^1: \int g_{ij}(\tau)\tilde{\phi}(\tau)d\,\tau = 0.
 \end{equation}
From \eqref{eq:L1_zero}, we can then conclude that
\begin{equation}\label{eq:g_zero}
    g_{ij}(\cdot) \equiv 0.
\end{equation}

Next, from \eqref{eq:g_zero} and the definition of $g$, we thus have 
for all $a \in \supp(A)$ and $v \in \mathbb{R}^d$
\begin{equation}
    h^{\prime}(M_0 a + v)M_0 = W_{\phi}.
\end{equation}
As $M_0$ has full row rank, it thus holds that
\begin{equation}
    h^{\prime}(M_0 a + v) = W_{\phi}M_0^{\dagger}.
\end{equation}
We therefore have that the function $h = \phi \circ g_0$ is an affine transformation. Furthermore, using that $g_0$ is injective and $\phig$ is bijective, the composition $h = \phi \circ g_0$ is also injective. Therefore, there exists an invertible matrix $H \in \R^{d \times d}$ and a vector $c \in \R^d$ such that 
\begin{equation}
    \forall z \in \mathbb{R}^d: \phi \circ g_0 (z) = Hz + c,
\end{equation}
which concludes that proof of the forward direction.

Next, we show the backward direction of the statement ($\phi \text{ satisfies } \eqref{eq:affine_ident} \implies \phi \text{ satisfies } \eqref{eq:ident_constraint})$. Let $\phi \in \Phi$ satisfy \eqref{eq:ident_constraint}. Then, there exists an invertible matrix $H \in \R^{d\times d}$ and a vector $c \in \R^d$ such that $\forall z \in \mathbb{R}^d: (\phi \circ g_0)(z) = Hz + c$. We first show the second condition of \eqref{eq:ident_constraint}. By the invertibility of $H$, the composition $\phi \circ g_0$ is bijective. By Lemma~\ref{lemma:phi_bijection}, we thus have that $\phig$ is bijective.
Next, we show the first condition of \eqref{eq:ident_constraint}. Let $\mu_V \coloneqq \EX[V]$.
We have for all $\alpha \in \R^d, W \in \R^{d \times d}$
\begin{align*}
    \EX&[\norm{\phi(X) - \alpha - WA}^2] \\
    &= \EX[\norm{(\phi \circ g_0)(Z) - \alpha - WA}^2] \\
    &= \EX[\norm{HZ + c - \alpha - WA}^2] \\
    &= \EX[\norm{H(M_0 A + V) + c - \alpha - WA}^2] \\
    &= \EX[\norm{(HM_0 - W)A + (c - \alpha) + HV}^2] \\
    &= \EX[\norm{(HM_0 - W)A + (c + H\mu_V - \alpha) + H(V - \mu_V)}^2] \\
    &= \EX[\norm{(HM_0 - W)A + (c + H\mu_V - \alpha)}^2] \\
    & \qquad \qquad + 2\EX[((HM_0 - W)A + (c + H\mu_V - \alpha))^\top H(V - \mu_V)] + \EX[\norm{H(V- \mu_V)}^2] \\
    &= \EX[\norm{(HM_0 - W)A + (c + H\mu_V - \alpha)}^2] + \EX[\norm{H(V- \mu_V)}^2]. \qquad \qquad \text{since $A \ci V$}
\end{align*}
Since the covariance matrix of $A$ is full rank, we therefore have that 
\begin{equation}\label{eq:proof:thm2:lsq}
(\alpha_{\phi}, W_{\phi}) \overset{def}{=} \argmin_{\alpha \in \R^d, W \in \R^{d \times k}} \EX[\norm{\phi(X) - \alpha - WA}^2] = (c + H\mu_V, HM_0).
\end{equation}
Then, we have for all $a \in \supp(A)$ that
\begin{align*}
    \EX[\phi(X) - \alpha_{\phi} - W_{\phi}A \mid A =a] &\overset{(*)}{=} \EX[(\phi \circ g_0)(Z) -(c + H \mu_V) - HM_0 A \mid A =a] \\
    &= \EX[HZ + c -(c + H \mu_V) - HM_0 A \mid A =a] \\
    &= \EX[H(M_0A + V) + c -(c + H \mu_V) - HM_0 A \mid A =a] \\
    &= \EX[HV - H \mu_V \mid A =a] \\
    &\overset{(**)}{=} H\mu_V - H\mu_V \\
    &= 0,
\end{align*}
where the equality $(*)$ follows from \eqref{eq:proof:thm2:lsq} and $(**)$ holds by $A \ci V$. This concludes the proof.
\end{proof}

\section{Heuristic for choosing regularization parameter $\lambda$}\label{app:reg_param}
To select the regularization parameter $\lambda$ in the regularized auto-encoder objective function \eqref{eq:auto_encoder_mmr_loss}, we employ the following heuristic. Let $\Lambda = \{\lambda_1, \dots, \lambda_m\}$ be our candidate regularization parameters, ordered such that 
$\lambda_1 > \lambda_2 > \cdots > \lambda_m$. For each $\lambda_i$, we 
estimate the minimizer of \eqref{eq:auto_encoder_mmr_loss}
and calculate the reconstruction loss.
Additionally, we compute the reconstruction loss when setting $\lambda = 0$ as the baseline loss. We denote the resulting reconstruction losses for different $\lambda_i$ as $R_{\lambda_i}$ (and $R_0$ for the baseline loss). Algorithm~\ref{alg:reg_param} illustrates how $\lambda$ is chosen.

In our experiments, we set a cutoff parameter at 0.2 and for each setting execute the heuristic algorithm only during the first repetition run to save computation time. Figure~\ref{fig:simu_lambda} demonstrates the effectiveness of our heuristic. Here, our algorithm would suggest choosing $\lambda = 10^2$, which also corresponds to the highest R-squared value.

\begin{algorithm}[ht]
\caption{Choosing $\lambda$ parameter}
\label{alg:reg_param}
\KwIn{cut off parameter $\alpha$
}{
$\lambda \leftarrow \lambda_{m}$ \;
\For{$i=1$ \KwTo $m-1$}{
    $\delta_i = \frac{R_{\lambda_i}}{R_{0}} - 1$ \;
    \uIf{$\delta_i < \alpha$}{
    $\lambda \leftarrow \lambda_i$ \;
    \Break
  }
}
\Return $\lambda$
}
\end{algorithm}
\begin{figure}[ht]
    \centering
    \includegraphics[width=0.7\textwidth]{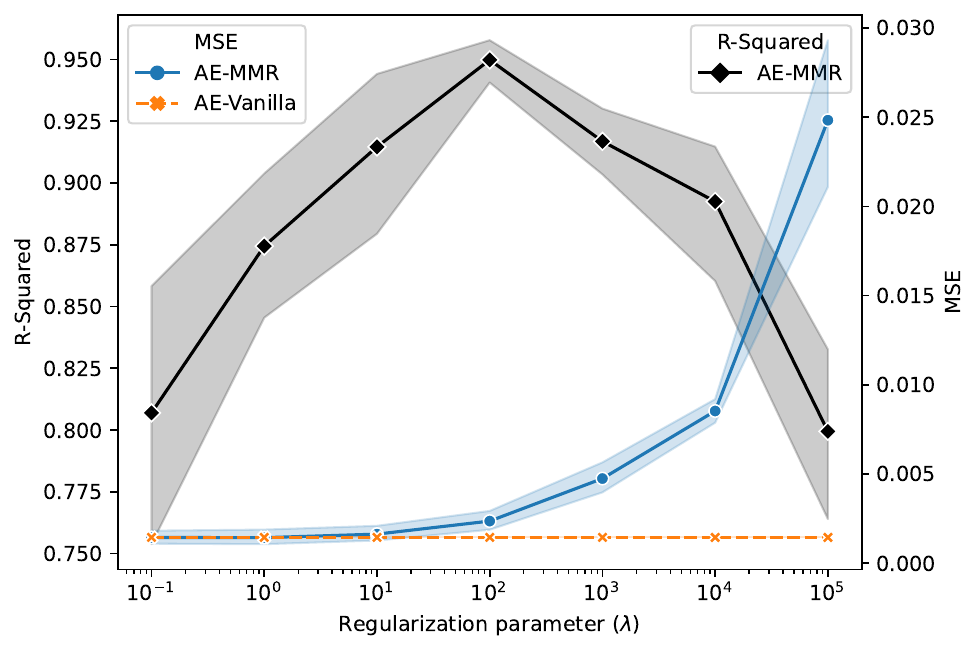}
    \caption{}
    \label{fig:simu_lambda}
\end{figure}

Another approach to choose $\lambda$ is to apply the conditional moment test in \cite{muandet2020kernel} to test whether the linear invariance constraint \eqref{eq:linear_inv} is satisfied. Specifically, in a similar vein to \cite{Jakobsen2020, saengkyongam2022exploiting}, we may select the smallest possible value of $\lambda$ for which the conditional moment test is not rejected.

\section{Possible ways of checking applicability of the proposed method} 

Due to the nature of extrapolation problems, it is not feasible to definitively 
verify the method's underlying assumptions from the training data.
However, we may still be able to check and potentially falsify the applicability of our approach in practice. To this end, we propose comparing its performance under two different cross-validation schemes:
\begin{itemize}
    \item[(i)] Standard cross-validation, where the data is randomly divided into training and test sets.
    \item[(ii)] Extrapolation-aware cross-validation, in which the data is split such that the support of $A$ in the test set does not overlap with that in the training set.
\end{itemize}
By comparing our method's performance across these two schemes, we can assess the applicability of our overall method. A significant performance gap may suggest that some key assumptions are not valid and one could consider adapting the setting, e.g., by transforming $A$ (see Remark~\ref{remark:assumptions}).

A further option of checking for potential model violations is to test for linear invariance of the fitted encoder, using for example the conditional moment test by \cite{muandet2020kernel}. If the null hypothesis of linear invariance is rejected, this indicates that either the optimization was unsuccessful or the model is incorrectly specified.

\section{Details on the experiments}

\subsection{Data generating processes (DGPs) in Section~\ref{sec:exp}}\label{app:exp:add_dgp}

In all experiments, we employ a neural network with the following details as the mixing function $g_0$:
\begin{itemize}
    \item Activation functions: Leaky ReLU
    \item Architecture: three hidden layers with the hidden size of 16
    \item Initialization: weights are independently drawn from Unif$(-1, 1)$.
\end{itemize}
As for the matrix $M_0$, each element is indepedently drawn from Unif$(-2, 2)$. The covariance $\Sigma_V$ is generated by $\Sigma_V \coloneqq AA^\top + \text{diag}(V)$, where $A$ and $V$ are indepedently drawn from Unif$([0, 1]^d)$.

For the functions $h$ and $\ell$ in the case of multi-dimensional $A$ in Section~\ref{sec:exp:cf}, we employ the following neural network:
\begin{itemize}
    \item Activation functions: Tanh
    \item Architecture: one hidden layer with the hidden size of 64
    \item Initialization: weights are independently drawn from Unif$(-1, 1)$.
\end{itemize}

Lastly, in all experiments, we use the Gaussian kernel for the MMR term in the objective function \eqref{eq:auto_encoder_mmr_loss}. The bandwidth of the Gaussian kernel is chosen by the median heuristic \citep[e.g.,][]{Sriperumbudur2009}.

\subsection{Auto-encoder details}\label{app:ae_details}
We employ the following hyperparameters for all autoencoders in our experiments. The same architecture is utilized for both the encoder and decoder:
\begin{itemize}
    \item Activation functions: Leaky ReLU
    \item Architecture: three hidden layers with the hidden size of 32
    \item Learning rate: 0.005
    \item Batch size: 256
    \item Optimizer: Adam optimizer with $\beta_1 = 0.9, \beta_2 = 0.999$
    \item Number of epochs: 1000.
\end{itemize}
To improve the optimization performance of the regularized auto-encoder \texttt{AE-MMR}, we initialize the weights of \texttt{AE-MMR} at the solution obtained from the vanilla auto-encoder \texttt{AE-Vanilla}. A similar initialization technique has been used in, e.g., \cite{saengkyongam2022exploiting}.

For the variational auto-encoder, we employ a standard Gaussian prior with the same network architecture and hyperparameters as defined above.

\section{Further details on experimental results} \label{app:additionalexpresults}
Figures~\ref{fig:3d_MMR} and~\ref{fig:3d_Vanilla} show reconstruction performance of the hidden variables for the experiment described in Section~\ref{sec:exp:unmix}.
\begin{figure}[ht]
    \centering
    \includegraphics[width=0.8\textwidth]{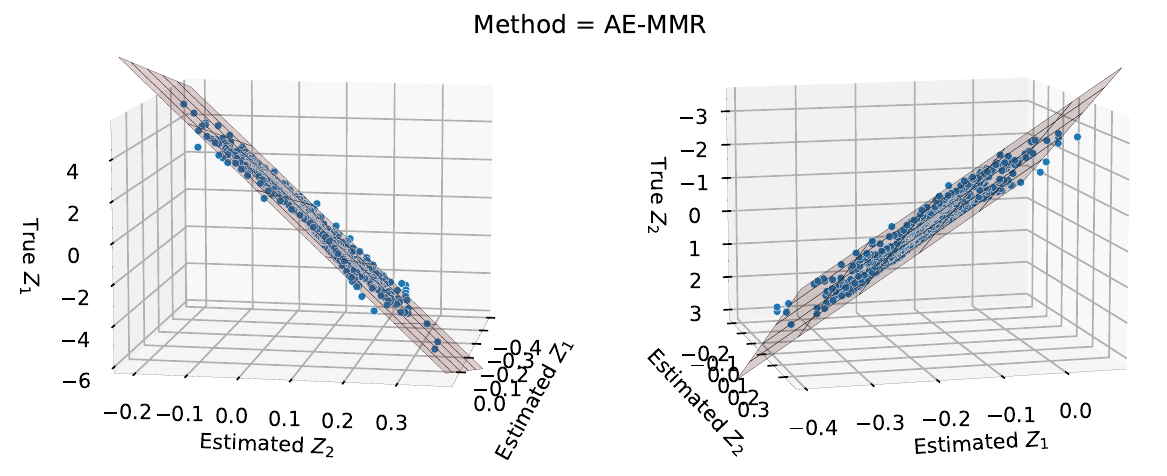}
    \caption{%
    }\label{fig:3d_MMR}
\end{figure}
\begin{figure}[ht]
    \centering
    \includegraphics[width=0.8\textwidth]{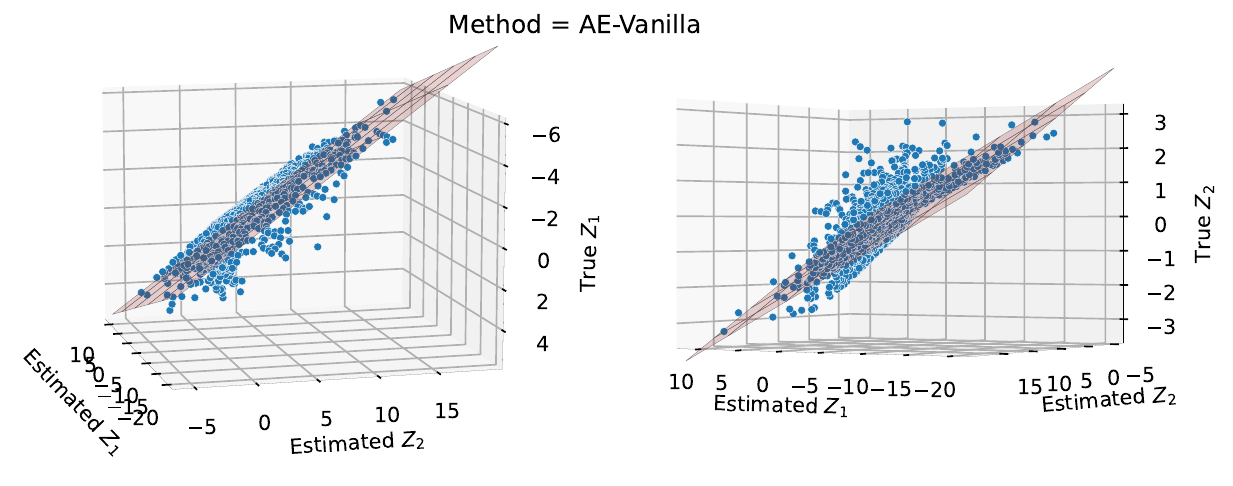}
    \caption{%
    }\label{fig:3d_Vanilla}
\end{figure}

\subsection{Section~\ref{sec:exp:cf} continued: impact of unobserved confounders}
Our approach allows for unobserved confounders between $Z$ and $Y$. This section explores the impact of such confounders on extrapolation performance empirically. We consider the SCM as in \eqref{eq:scm_exp_extrapolation} from Section~\ref{sec:exp:cf}, where we set $\gamma = 1.2$ and generate the noise variables $U$ and $V$ from a joint Gaussian distribution with the covariance matrix $\Sigma_{U,V} = \bigl( \begin{smallmatrix}1 & \rho\\ \rho & 1\end{smallmatrix}\bigr)$. Here, the parameter $\rho$ controls the dependency between $U$ and $V$, representing the strength of unobserved confounders. Figure~\ref{fig:simu_extrapolation_U} presents the results for four different confounding levels $\rho = (0, 0.1, 0.5, 0.9)$. Our method, \texttt{Rep4Ex-CF}, demonstrates robust extrapolation capabilities across all confounding levels.

\begin{figure}[!t]
    \centering
    \includegraphics[width=0.99\textwidth]{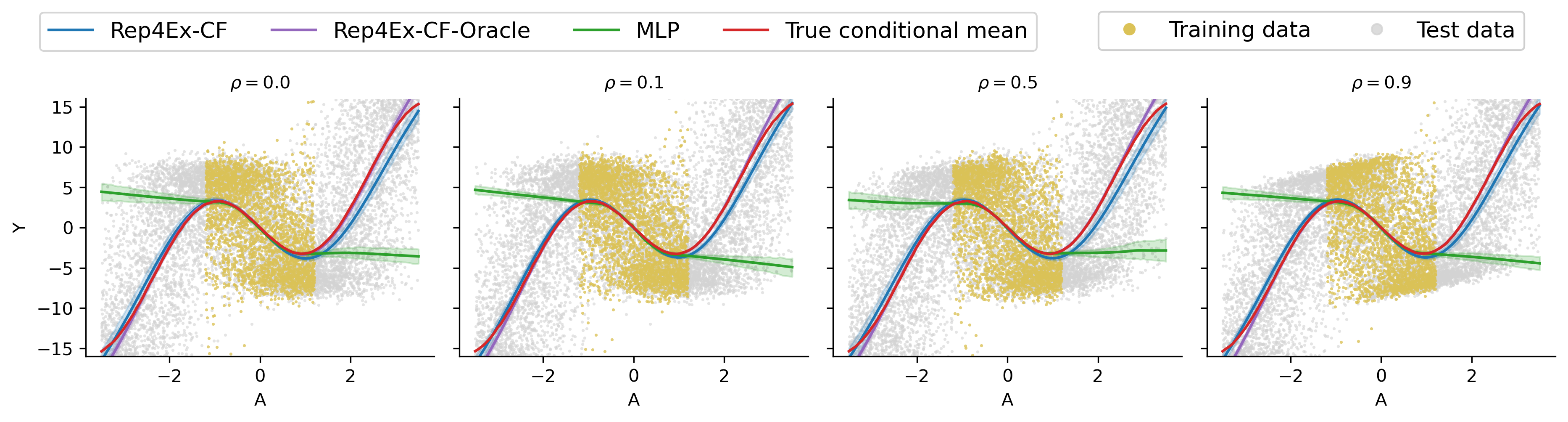}
    \caption{
    Different estimations of the target of inference $\EX[Y| \doo(A\coloneqq\cdot)]$ as the strength of unobserved confounders ($\rho$) increases. Notably, the extrapolation performance of \texttt{Rep4Ex-CF} remains consistent across all confounding levels.
    }\label{fig:simu_extrapolation_U}
\end{figure}

\subsection{Section~\ref{sec:exp:cf} continued: robustness against violating the model assumption of noiseless $X$}\label{app:robust_noise_X}
In Setting~\ref{setting:1}, we assume that the observed features $X$ are deterministically generated from $Z$ via the mixing function $g_0$. However, this assumption may not hold in practice. In this section, we investigate the robustness  of our method against the violation of this assumption. We conduct an experiment with the setting similar to that with one-dimensional $A$ in Section~\ref{sec:exp:cf} but here we introduce independent additive random standard Gaussian noise in $X$, i.e., $X \coloneqq g_0(Z) + \epsilon_X$, where $\epsilon_X \sim N(0, \sigma^2 I_{m})$. The parameter $\sigma$ controls the noise level. Figure~\ref{fig:simu_extrapolation_noisy} illustrates the results for different noise levels $\sigma = (1, 2, 4)$. The results indicate that our method maintains successful extrapolation capabilities under moderate noise conditions. Therefore, we believe it may be possible to relax the assumption of the absence of noise in $X$.

\begin{figure}[!t]
    \centering
    \includegraphics[width=0.99\textwidth]{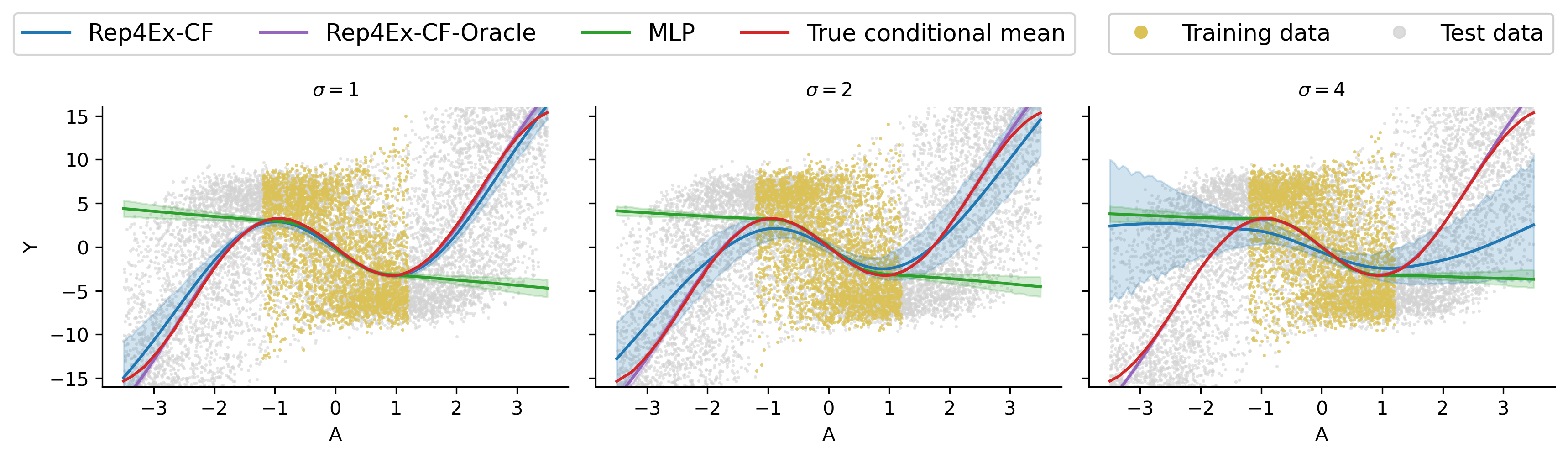}
    \caption{
    Different estimations of the target of inference $\EX[Y| \doo(A\coloneqq\cdot)]$ in the presence of noise in $X$. Our method, \texttt{Rep4Ex-CF}, demonstrates the ability to extrapolate beyond the training support when the noise is not too large, suggesting the potential to relax the assumption of the absence of noise in $X$.
    }\label{fig:simu_extrapolation_noisy}
\end{figure}

\end{appendices}

\end{document}